\documentclass[letterpaper]{article} \usepackage[preprint]{aaai2027}
\usepackage[hyphens]{url}  \usepackage{graphicx}    \usepackage{natbib}  \usepackage{caption}   \usepackage{graphicx}
\usepackage{amsmath,amssymb}
\usepackage{amsthm}
\usepackage{thmtools}
\usepackage{enumitem}

\newtheorem{remark}{Remark}
\usepackage[disable]{todonotes}
\setuptodonotes{color=gray!30}
\usepackage{subcaption}
\usepackage[ruled,vlined,linesnumbered]{algorithm2e}
\usepackage{placeins}
\usepackage{booktabs}
\usepackage{tasks}
\usepackage{tabularx}
\usepackage{xspace}
\usepackage{csquotes}

\usepackage{tikz}

\usepackage{xcolor}
\newcommand{\rednote}[1]{}
\newcommand{\greennote}[1]{}

\usepackage{acro}
\DeclareAcronym{scm}{
    short = SCM,
    long = Structural Causal Model ,
    short-plural = s ,
    long-plural = s ,
}
\DeclareAcronym{nn}{
    short = NN,
    long = Neural Network ,
    short-plural = s ,
    long-plural = s ,
}
\DeclareAcronym{ibp}{
    short = IBP,
    long = Interval Bound Propagation ,
}
\DeclareAcronym{bab}{
    short = BaB,
    long = Branch-and-Bound ,
}

\copyrighttext{Preprint. Under review.}

\title{Computing Actual Causes for Neural Network Predictions\\ under Structured Causal Inputs}
\author{
    Jannick Strobel, Muqsit Azeem, Stefan Leue
}
\affiliations{
    University of Konstanz\\
    \{jannick.strobel, muqsit.azeem, stefan.leue\}@uni-konstanz.de 
}

\begin{document}

\maketitle

\newcommand{\causexbab}{$\textsc{CausEx}_{\textsc{BaB}}$\xspace}
\newcommand{\aci}{\textsc{ACI}\xspace}
\newcommand{\ilp}{\textsc{ILP}\xspace}
\newcommand{\brutef}{\textsc{BF}\xspace}

\newif\iflongrefs
\longrefsfalse 

\newcommand{\figname}{\iflongrefs Figure\else Fig.\fi}
\newcommand{\secname}{\iflongrefs Section\else Sec.\fi}
\newcommand{\appname}{\iflongrefs Appendix\else App.\fi}
\newcommand{\tabname}{\iflongrefs Table\else Tab.\fi}
\newcommand{\thmrefname}{\iflongrefs Theorem\else Thm.\fi} 
\newcommand{\remname}{\iflongrefs Remark\else Rem.\fi}
\newcommand{\algname}{\iflongrefs Algorithm\else Alg.\fi}

\newcommand{\figref}[1]{\figname~\ref{#1}}
\newcommand{\secref}[1]{\secname~\ref{#1}}
\newcommand{\appref}[1]{\appname~\ref{#1}}
\newcommand{\tabref}[1]{\tabname~\ref{#1}}
\newcommand{\thmref}[1]{\thmrefname~\ref{#1}}
\newcommand{\remref}[1]{\remname~\ref{#1}}
\newcommand{\algref}[1]{\algname~\ref{#1}}

\newcommand{\todojs}[2][]{\todo[color=cyan!40, #1]{\textbf{JS:} #2}}
\newcommand{\todoma}[2][]{\todo[color=orange!40, #1]{\textbf{MA:} #2}}
\newcommand{\todosl}[2][]{\todo[color=green!40, #1]{\textbf{SL:} #2}}

\begin{abstract}

 Explaining the predictions of neural networks is a central challenge in trustworthy AI.
 Existing explanation methods, such as those based on feature attribution or minimal sufficient sets, typically treat input features as independent, which can yield misleading explanations when inputs exhibit structured dependencies.
 We address this by formalizing explanations as Halpern--Pearl (HP) actual causes, modeling input dependencies using Boolean Structural Causal Models (SCMs).
 We compute HP causes by applying bound propagation and branch-and-bound techniques, while providing formal guarantees of completeness and minimality.
 Our experiments show that we substantially outperform brute-force and ILP baselines in scalability, and outperform heuristic search as graph size grows, computing all minimal actual causes on instances with search spaces of up to $2.3\times10^{13}$ candidate (cause, contingency) pairs, on SCMs with up to 28 nodes, within a 180s per-instance budget.
In a case study, we further show that ignoring input dependencies inflates the number of reported causes, 14.9\% of which are spurious under our SCM.

\end{abstract}
\section{Introduction}
\label{sec:intro}

Explaining the predictions of neural networks (NNs) is a central challenge in trustworthy AI~\cite{Zhang2021}. 
A useful explanation should identify why a particular prediction was made, and which feasible counterfactual changes would have altered it.
Many existing explanation methods rely on attribution scores that quantify the importance of input features, often under an implicit feature-independence assumption~\cite{Salih2025}.
However, in practice, inputs often exhibit structured causal dependencies: medical findings may constrain diagnoses, and high expenses may lead to negative cashflow while having an average income. 
Without accounting for these dependencies, a counterfactual explanation may vary negative cashflow directly instead of tracing it back to high expenses, so the root cause gets obscured among the other features it influences.
More generally, because changing one feature may constrain, determine, or invalidate changes to others, ignoring these dependencies can lead to misleading or even spurious explanations~\cite{Aas2021,Slack2020}.

This motivates explanations with explicit structural causal semantics. In order to represent this semantics, we adopt \emph{Halpern--Pearl (HP) actual causality}~\cite{Halpern2005,Halpern2015}. 
In this view, explaining a prediction means identifying minimal sets of features/variables whose values, if changed under an appropriate contingency, would change the prediction outcome.
To ensure that such counterfactual changes respect the structure of the input domain, we model feature dependencies using \emph{Structural Causal Models (SCMs)}~\cite{Pearl2009}. We assume that the SCM faithfully models the causal dependencies of the real-world domain, and thereby the dependencies present in the data on which the network is trained. Such SCMs can either be manually derived from domain knowledge, or learned from observational data via structure learning methods~\cite{Zheng2018}. We assume that such an SCM is given. \textbf{Our goal} is to compute HP actual causes, i.e., causal explanations, for a given NN prediction of an input whose feature dependencies are specified by a given SCM.

Importantly, such explanations need not be unique~\cite{Chockler2025}. Several minimal sets of variables may each be an actual cause of the same prediction. 
In a loan-risk model, for instance, one high-risk prediction might be explained by income alone, another by high expenses. 
Returning only one cause can hide these independent reasons, and for auditing and debugging we want the full set of minimal causal explanations, not just a single witness. In our experiments, 68.7\% of instances have more than one minimal cause.

Computing HP actual causes requires searching over candidate cause sets and contingencies via \emph{interventions}. An intervention sets a candidate cause to an alternative value, producing the counterfactual that shows whether the prediction would have changed. A contingency holds some of the other features fixed at their actual values, ruling out confounding from those features. This search is computationally hard in general~\cite{Eiter2006}.
Most existing work on actual-cause computation studies discrete causal or logical models without an NN predictor~\cite{Leitner-Fischer2013,Coenen2022,Ibrahim2020}.
Conversely, most NN explanation methods explain the predictor directly, without enforcing that counterfactual inputs respect an explicit SCM~\cite{Salih2025}.
The setting we study combines both: an SCM defines the structured input space, while an NN maps the resulting variables to a prediction.
This combination raises a difficult search problem, since one must reason simultaneously over candidate causes, contingencies, and the NN's prediction.

\textbf{Our key idea} is to compute HP actual causes by applying bound propagation and branch-and-bound techniques.
Instead of enumerating individual interventions and contingencies, we group them into regions of SCM-consistent inputs.
We then propagate sound bounds through the SCM and the NN via \ac{ibp}~\cite{Gowal2019}. During branch-and-bound~\cite{Bunel2020}, regions that provably preserve the prediction are pruned, while regions that provably change it witness actual causality. 
Inconclusive regions are refined by splitting and recursive invocation.

We focus on Boolean SCMs.
Their variables can represent categorical, thresholded, or logical properties, such as income status in a loan application or the presence of a clinical finding, while interventions and contingencies have a clear finite semantics.
This lets us isolate the core algorithmic problem of exact HP-cause computation.
At the same time, Boolean structural equations can be relaxed into multilinear functions that agree with the original SCM on all Boolean assignments.
As a result, we can reason soundly about whole regions of SCM-consistent 
inputs, rather than enumerating individual assignments one by one.

\noindent\textbf{Our contributions} can be summarized as follows:
\begin{itemize}
\item We formalize NN explanation under structured input dependencies as the computation of Halpern--Pearl actual causes over SCM-constrained inputs.
\item We introduce \causexbab, a sound and complete algorithm that
    combines multilinear relaxations of Boolean structural equations
    with bound propagation and branch-and-bound to return all minimal
    actual causes and minimum-cardinality witness contingencies.
    The search may be restricted to causes of size at most
    $k_{\max}$, preserving soundness and completeness within that bound.
\item We introduce synthetic SCM--NN benchmarks with trained NNs of varying capacity, and show that \causexbab substantially outperforms brute-force enumeration and ILP-based causality computation, and outperforms heuristic search as graph size grows, scaling to SCMs with up to 28 nodes and worst-case search spaces of up to $2.3\times10^{13}$ candidate cause--contingency pairs, and a 66.7\% majority at 30 nodes, with a 180s timeout.
\item In a case study for the U.S.\
Supplemental Nutrition Assistance Program (SNAP), \causexbab outperforms all baselines and shows that ignoring input dependencies
    more than doubles the median number of reported causes, 14.9\% of
    which are spurious under our SCM.
\end{itemize}

\section{Background}
\label{sec:background}
We give only a brief overview here. Please see \appref{app:prelims} and the cited references for a detailed background.

\paragraph{\aclp{scm}~\cite{Pearl2009}.}
We use \acp{scm} to represent causal relationships between variables. An \ac{scm} is a tuple $\mathcal{M} = (\vec{U}, \vec{V}, \mathcal{F})$ with exogenous variables $\vec{U} = \{U_1, \dots, U_d\}$, endogenous variables $\vec{V} = \{X_1, \dots, X_d\}$, and structural equations $\mathcal{F} = \{f_1, \dots, f_d\}$ such that $X_j := f_j(\mathrm{PA}_j, U_j)$, where $\mathrm{PA}_j \subseteq \vec{V} \setminus \{X_j\}$ are the endogenous parents of $X_j$. A \emph{context} $\vec{u}$ fixes $\vec{U}$ and thereby determines all of $\vec{V}$. We consider \emph{binary} SCMs, where $X_j \in \{0,1\}$ and each $f_j$ is a Boolean function. An \emph{intervention} $X \leftarrow x$ represents externally forcing $X$ to value $x$, overriding its usual causal mechanism, by replacing its structural equation with $X := x$.

\paragraph{Notation.}
We use $\vec V$, $\vec X$, and $\vec W$ to denote 
vectors of
variables. When combined with set-theoretic operators such as
$\subseteq$, $\setminus$, and $|\cdot|$, we identify these vectors, by
abuse of notation, with their underlying sets. In \secref{sec:approach}, where candidate causes and contingencies are manipulated
only as sets, we write $C$ and $W$ without an arrow.

\paragraph{Halpern and Pearl's Actual Causality~\cite{Halpern2015}.}
SCMs do not by themselves specify what counts as \emph{the} cause of an event. Halpern and Pearl's actual causality fills this gap and has gone through several revisions.
We adopt the latest, modified definition of actual causality~\cite{Halpern2015}, which avoids known problems with preemption and spurious causes at lower complexity. 
According to this definition, given $(\mathcal{M}, \vec{u})$, a set of variables $\vec{X} = \vec{x}$ is an \emph{actual cause} of an outcome $\varphi$ if
(\textbf{AC1}) $(\mathcal{M}, \vec{u}) \models \vec{X} = \vec{x}$ and $(\mathcal{M}, \vec{u}) \models \varphi$,
(\textbf{AC2}) there exist $\vec{W} \subseteq \vec{V} \setminus \vec{X}$ with actual value $\vec{w}$ and $\vec{x}' \neq \vec{x}$ such that $(\mathcal{M}, \vec{u}) \models [\vec{X} \leftarrow \vec{x}',\, \vec{W} \leftarrow \vec{w}]\, \lnot\varphi$, and
(\textbf{AC3}) $\vec{X}$ is minimal, i.e., no strict subset of $\vec{X}$ satisfies AC1 and AC2.
Intuitively, AC1 requires that the cause and the outcome both hold in the actual context, AC2 requires that changing $\vec{X}$ under some contingency $\vec{W}$ leads to $\neg\varphi$, and AC3 requires that no part of $\vec{X}$ alone would suffice. A \emph{contingency} $\vec{W}$ is a set of other variables held fixed at their actual values $\vec{w}$ while $\vec{X}$ is changed, so other variables that also affect $\varphi$ cannot mask or compensate for the change.

\paragraph{\acp{nn}.}
We consider \acp{nn} that map a vector of Boolean variables to a continuous value, formally represented as the mapping $N: \{0,1\}^d \rightarrow \mathbb{R}$, $N(\vec{x}) = \sigma_L \circ \sigma_{L-1} \circ \cdots \circ \sigma_1(\vec{x})$
where $\sigma_i(z) = \max(0, w_i \cdot z + b_i)$ is a ReLU-activated linear layer.

 \paragraph{Branch-and-Bound based \acl{nn} Analysis.}
 \ac{nn} verification decides whether a property of $N$'s output holds over an input region: given bounds on the inputs, it computes sound bounds on $N$'s output. \ac{ibp}~\cite{Gowal2019} propagates an input box through each layer using interval arithmetic. When bounds are inconclusive, \ac{bab}~\cite{Bunel2020} splits the region into sub-regions and recomputes bounds on each, until the property is certified, refuted, or all sub-regions are explored.

\section{Approach}
\label{sec:approach}

\definecolor{treeBaseFill}{HTML}{f7cfe2}
\definecolor{treeBaseEdge}{HTML}{DC267F}
\definecolor{treeSkipFill}{HTML}{CCCCCC}
\definecolor{treeSkipEdge}{HTML}{888888}
\definecolor{treeBranchFill}{HTML}{D0E8FF}
\definecolor{treeBranchEdge}{HTML}{1F77B4}
\definecolor{treeConfirmedFill}{HTML}{CCF0CC}
\definecolor{treeConfirmedEdge}{HTML}{1A6E1A}
\definecolor{treePrunedFill}{HTML}{FFCCCC}
\definecolor{treePrunedEdge}{HTML}{CC2020}
\definecolor{treeNaturalEdge}{HTML}{648FFF}
\definecolor{treeContingencyEdge}{HTML}{E68A00}
\definecolor{undecided}{HTML}{5C5C5C}

We compute actual causes of \ac{nn} predictions under structured input dependencies, modeled by an SCM $\mathcal{M}$, under the modified HP definition of actual causality.
We present our approach through a running example. We first show how
exhaustive actual-cause computation searches over candidate causes and
contingencies, and then explain how \causexbab replaces this enumeration
with bound propagation and branch-and-bound over regions.

\paragraph{Problem Setup.}
The context $\vec u$ induces the actual assignment $\vec x$ and prediction $y = N(\vec x)$ (\secref{sec:background}). We say an assignment $\vec x'$ satisfies the outcome $\varphi$ if $N(\vec x') \in S$, for some fixed \emph{acceptance set} $S \subseteq \mathbb{R}$ that specifies what counts as the outcome. This set is a modeling choice, orthogonal to the rest of our approach. Throughout this section, including the running example, we take $S = [\tau, \infty)$, where $\tau$ is a hard threshold on the output. \secref{sec:exp} instead specifies $\varphi$ via a distance metric. $\varphi$ holds when $N(\vec x')$ lies within a threshold distance of the actual output, i.e., $S$ becomes an interval around it. Our goal is to find all sets of variables $\vec X \subseteq \vec V$ that are \emph{actual causes} (AC1--AC3) of $\varphi$, witnessed by an intervention $\vec X \leftarrow \vec x'$ that, under some contingency, falsifies $\varphi$.

\subsection{Running Example and Exhaustive Search}\label{subsec:runningexample}
We illustrate our algorithm using a concrete example.
Consider a bank that uses an \ac{nn} to predict the default risk of loan applicants.
Given information about an applicant, the \ac{nn} outputs a risk score
$y \in [0,1]$ indicating the probability of loan default.
The bank classifies an applicant as \emph{high risk} if $y \in S$, for $S = [\tau, \infty)$, with $\tau = 0.5$, and as \emph{low risk} otherwise.
We model five binary applicant features:
$X_1$~(high income), $X_2$~(negative cashflow), $X_3$~(high expenses),
$X_4$~(unsecured exposure), and $X_5$~(has guarantor).
These features are causally related: high income can cause high expenses;
high expenses combined with no high income can cause negative cashflow;
negative cashflow without a guarantor indicates unsecured exposure.

We formalize this as a Structural Causal Model.
\figref{fig:loan_scm} illustrates the SCM with its structural equations together with the \ac{nn}
classifier $N$.
Each endogenous variable $X_i$ is determined by its causal parents and an
exogenous variable $U_i$ via a structural equation.
For root nodes (without parents), $X_i = U_i$.
For non-root nodes, $X_i$ is a node-specific Boolean function of its parents, combined with $U_i$ via XOR. Concretely, $X_3 = X_1 \oplus U_3$, $X_2 = (X_3 \wedge \lnot X_1) \oplus U_2$, and $X_4 = (X_2 \wedge \lnot X_5) \oplus U_4$.
The exogenous variables $U_1, \ldots, U_5$ encode all background factors not
explicitly modeled.
Their role is to capture uncertainty: even with high income, high expenses
are not guaranteed, and negative cashflow without a guarantor does not
necessarily lead to unsecured exposure.
Given a fixed context $\vec{u} = (u_1, \ldots, u_5)$, all endogenous
features are fully determined, and the predictor output $y = N(X_1,
\ldots, X_5)$ can be computed.

\begin{figure}[t]
\centering
\includegraphics[width=1.0\columnwidth]{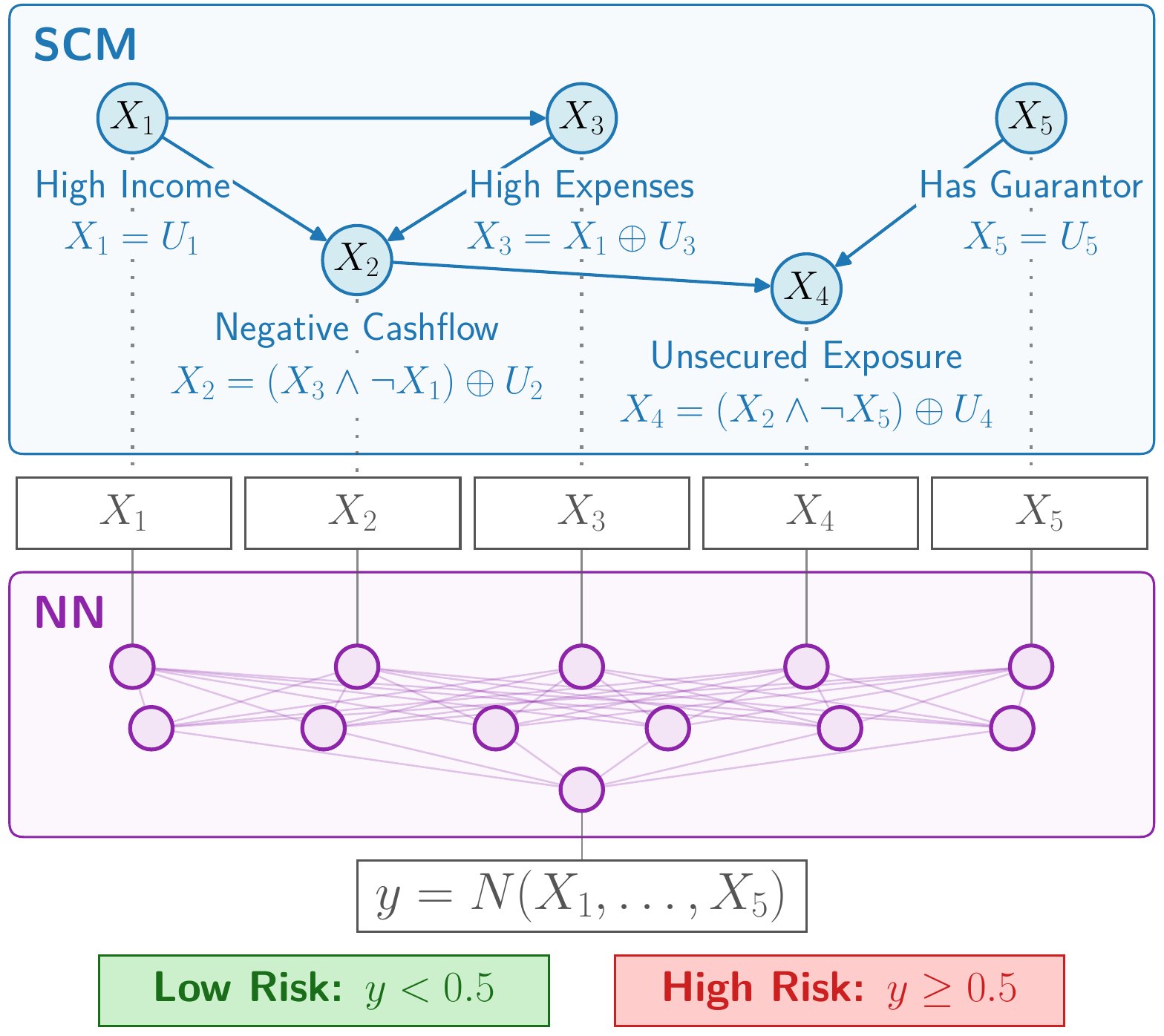}
\caption{Loan risk SCM, as presented in \secref{subsec:runningexample}.}
    \label{fig:loan_scm}
\end{figure}

We fix the following context for this  example throughout the paper: $\vec u = (1,1,0,1,0)$. This induces the actual assignment $\vec x = (1,1,1,0,0)$\footnote{\scriptsize Computed in topological order: $X_1{=}1$, $X_3{=}1$, $X_5{=}0$, $X_2{=}1$, $X_4{=}0$.}: the applicant has high income, negative cashflow, high expenses, no unsecured exposure, and no guarantor. The predictor outputs $y = N(\vec x) = 0.66 \ge \tau$, so the bank classifies the applicant as \emph{high risk}. Our goal is to find minimal sets of features $\vec X \subseteq \{X_1, \ldots, X_5\}$ that are actual causes of this classification, that is, interventions that, under some contingency, would push the prediction below $\tau$ and flip the applicant to \emph{low risk}.

\smallskip
\noindent\textbf{Exhaustive search}
enumerates every candidate cause set $C \subseteq \vec V$ assigning its
variables the \textbf{\textcolor{treeBaseEdge}{\textsc{cause}}} role and fixing them to their counterfactual values $\vec x'_C$ (i.e.\ the Boolean complements of their actual values).
Checking AC2 for $C$ then amounts to determining
whether some contingency makes the intervention
$C \leftarrow \vec x'_C$ falsify $\varphi$.
For this, we enumerate every subset
$W \subseteq \vec V \setminus C$.
Variables in $W$ take the
\textbf{\textcolor{treeContingencyEdge}{\textsc{contingency}}} role and are fixed to their actual values $\vec w$,
while the remaining variables
$\vec V \setminus (C \cup W)$ take the \textbf{\textcolor{treeNaturalEdge}{\textsc{natural}}} role and
follow their structural equations.
Each choice of $W$ thus
induces one fully-determined counterfactual assignment $\tilde{\vec x}$
and prediction $N(\tilde{\vec x})$. If
$N(\tilde{\vec x}) \notin S$ for some $W$, then $C$ satisfies AC2,
witnessed by $W$. Otherwise, $C$ is not a cause.

To enforce minimality (AC3), we enumerate candidate cause sets by increasing size, discarding any set that is a superset of an already-found cause. For each remaining candidate, check contingencies by increasing size, so the first one falsifying $\varphi$ is a minimum-cardinality witness. Both the number of candidate cause sets and the number of contingencies per candidate grow exponentially in $|\vec V|$, so this exhaustive approach quickly becomes infeasible.

\begin{figure*}[t]
\centering
\includegraphics[width=\textwidth]{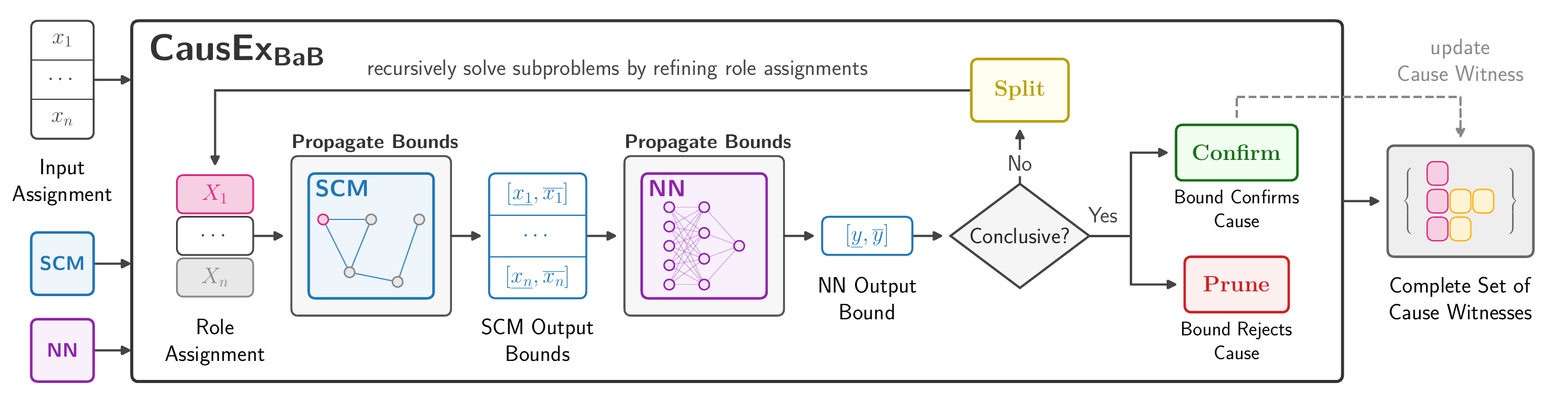}
\caption{Overview of \causexbab. For a selected candidate cause, an input 
together with a role assignment (\textcolor{treeBaseEdge}{\textsc{cause}} / \textcolor{treeContingencyEdge}{\textsc{contingency}} / \textcolor{treeNaturalEdge}{\textsc{natural}} / \textcolor{undecided}{\textsc{undecided}}) is propagated first through the SCM via interval arithmetic and then through the \ac{nn} via \ac{ibp}, to obtain an NN output bound. If this bound is conclusive, the region is pruned or confirmed, and a confirmed region yields a cause witness; otherwise, the role assignment is split into subproblems, and the process repeats recursively on each subproblem.}\label{fig:overview}
\end{figure*}

\subsection{The \causexbab Algorithm}
Rather than checking candidate cause and contingency pairs one at a time, we reason about \emph{regions} that represent many possible role assignments at once. 
To this end, we mark a variable \textcolor{undecided}{\textbf{\textsc{undecided}}} when it has not been resolved to \textsc{natural}, in which case it follows its structural equation, or to \textsc{contingency}, in which case it is fixed to its actual value.
\figref{fig:overview} presents an overview of our approach, which we refer to as \causexbab.
We analyze each region using sound output bounds. Conclusive regions are pruned or confirmed without examining their 
assignments individually, while inconclusive regions are refined by resolving one \textsc{undecided} variable and recursively analyzing the two resulting subregions.

\paragraph{Representing Contingencies as Regions.}
For a fixed candidate cause set $C$, assigning every remaining variable either the \textsc{natural} or \textsc{contingency} role specifies one possible contingency. 
\causexbab compactly represents many such assignments using \emph{partial role assignments}, in which variables may remain \textsc{undecided}. 
A region $\mathcal{B}$ consists of a partial role assignment together with an interval $[\ell_X,h_X]$ for each variable $X\in\vec V$, representing the values that $X$ may take across the contingencies captured by the region.
For \textsc{cause} and \textsc{contingency} variables, this interval collapses to the corresponding fixed value. 
For \textsc{natural} variables, it is induced by the structural equation, while for \textsc{undecided} variables it is the hull of the \textsc{natural} and \textsc{contingency} possibilities.

We write $\mathcal{B}_0(C)$ for the initial region of candidate cause set $C$, in which the variables in $C$ take the \textsc{cause} role and all remaining variables are \textsc{undecided}. 
This region represents all possible contingencies for $C$. 
We next explain how its intervals are computed and propagated through the SCM.

\paragraph{Propagating Bounds.}
To propagate intervals through the Boolean SCM, we use the continuous relaxations of Boolean functions as used in \citet{Petersen2022}.

\begin{tabular}{@{} l l @{}}
    \textbullet~ $\text{NOT}(a) = 1-a$ \quad & \textbullet~ $\text{OR}(a,b) = a+b-ab$ \\[2pt]
    \textbullet~ $\text{AND}(a,b) = ab$ \quad & \textbullet~ $\text{XOR}(a,b) = a+b-2ab$ \\
\end{tabular}

These relaxations are exact on Boolean inputs.
Each relaxation is multilinear. 
As a result, the extrema of a single relaxed gate over its parents' interval bounds are always attained at a corner of the corresponding interval box. This gives the tightest possible interval for that gate alone.
On $[0,1]$, $\text{NOT}$, $\text{AND}$, and $\text{OR}$ are additionally monotonic in each argument ($\text{NOT}$ decreasing, and $\text{AND}$ and $\text{OR}$ increasing). For these gates, evaluating the appropriate interval endpoints suffices, whereas the non-monotonic $\text{XOR}$ gate requires evaluating all four corners.
The resulting map $\mathrm{bound}_{\mathrm{SCM}}$ processes the variables of $\mathcal M$ in topological order, applying the appropriate role rule using each variable's already-computed parent bounds.
$\mathrm{bound}_{\mathrm{NN}}$ then propagates the resulting feature bounds through $N$ via \ac{ibp}~\cite{Gowal2019}. 
Affine layers are bounded using interval arithmetic, and each ReLU maps an interval $[\ell,h]$ to $[\max(0,\ell),\max(0,h)]$.

\paragraph{\causexbab.}
Our algorithm, presented in \algref{alg:causexbab}, computes minimal causes with minimum-cardinality contingencies that satisfy AC1--AC3.\footnote{
\algref{alg:causexbab} searches all candidate sizes
$k=1,\ldots,|\vec V|$. The outer loop may be capped at $k_{\max}<|\vec V|$ to reduce computation, 
in which case 
completeness holds only for causes of size at most $k_{\max}$.}
AC1 is satisfied by assumption. The search only considers actual instances where $\varphi$ already holds ($y \in S$). The algorithm establishes AC2 and enforces AC3 by pruning or confirming regions based on their bounds $[\ell,h]$ on $N(\vec x')$. A region can be pruned if $[\ell,h] \subseteq S$ (every input satisfies $\varphi$) and confirmed if $[\ell,h] \cap S = \emptyset$ (every input falsifies $\varphi$). For our threshold instantiation $S = [\tau,\infty)$, this specializes to $\ell \geq \tau$ and $h < \tau$ respectively. \causexbab encompasses the following steps:
{\footnotesize
\begin{algorithm}[htb]
\caption{\causexbab}
\label{alg:causexbab}
\DontPrintSemicolon
\KwIn{$\vec x$ with $N(\vec x)\in S$, SCM $\mathcal M$, NN $N$, set $S$}
\KwOut{Set of all $(C, W^*)$ pairs} 

$\mathcal{C}^* \leftarrow \emptyset$\;

\For{$k = 1, \dots, |\vec V|$}{
  \For{each $k$-subset $C \subseteq \vec V$}{
    \lIf{$\exists\, (C', W') \in \mathcal{C}^*$ s.t. $C' \subsetneq C$}{\textbf{skip}}
    $Q \leftarrow \{\mathcal{B}_0(C)\}$\;
    $W^* \leftarrow \bot$
    \tcp{best cont. for $C$ so far, $\bot$ = none found yet}

    \While{$Q \neq \emptyset$}{
      $\mathcal{B} \leftarrow \mathrm{pop}(Q)$\;
      $W \leftarrow \mathrm{contingency}(\mathcal{B})$\;
      \tcp{get cont. vars. from $\mathcal{B}$}
      \lIf{$W^* \neq \bot$ and $|W| \geq |W^*|$}{\textbf{prune}}
      \Else{
      $[\vec\ell, \vec h] \leftarrow \mathrm{bound}_{\mathrm{SCM}}(\mathcal{M}, \mathcal{B})$\;
      $[\ell, h] \leftarrow \mathrm{bound}_{\mathrm{NN}}(N,\, [\vec\ell, \vec h])$\;
\lIf{$[\ell, h] \subseteq S$}{\textbf{prune}}
\uElseIf{$[\ell, h] \cap S = \emptyset$}{
\lIf{$W^* = \bot$ or $|W| < |W^*|$}{$W^* \leftarrow W$}
      }
      \Else{
        $j \leftarrow \mathrm{selectVar}(\mathcal{B})$\;
        $Q \leftarrow Q \cup \{\mathcal{B}[j : \textsc{natural}]\}$\;
        $Q \leftarrow Q \cup \{\mathcal{B}[j : \textsc{contingency}]\}$
        \tcp*{branch selected node and add to queue}
      }
      }
    }
    \lIf{$W^* \neq \bot$}{$\mathcal{C}^* \leftarrow \mathcal{C}^* \cup \{(C, W^*)\}$ \tcp*[h]{witness found: record $C$}
    }
  }
}
\Return $\mathcal{C}^*$\;
\end{algorithm}}

\begin{enumerate}
	\item \textbf{Enumerate candidates.} We consider candidate cause sets $C \subseteq \vec V$ by increasing size $k = 1,\ldots, |\vec V|$. Before searching a candidate, we check whether some already-confirmed cause $C'$ is a strict subset of $C$. If so, $C$ cannot be minimal, and we skip it without further computation. Because candidates are visited in increasing order of size, this check guarantees AC3
    minimality of every cause set we report. For a surviving candidate $C$, we initialize a queue $Q$ with the single region $\mathcal{B}_0(C)$, in which the variables in $C$ take the \textsc{cause} role and every other variable is \textsc{undecided}. This region represents all possible contingencies
	for the intervention on $C$.

	\item \textbf{Compute bounds and classify.} We repeatedly pop a region $\mathcal{B}$ from $Q$. Let $W=\mathrm{contingency}(\mathcal{B})$ be the variables already assigned the \textsc{contingency} role. If $W^*\neq\bot$ and $|W|\geq|W^*|$, no refinement of $\mathcal{B}$ can yield a smaller contingency, so we prune it.
Otherwise, $\mathrm{bound}_{\mathrm{SCM}}$ propagates the intervals induced by $\mathcal{B}$ through $\mathcal{M}$ to obtain bounds
    $[\vec\ell, \vec h]$ on every endogenous variable.We then propagate these feature bounds through $N$ via $\mathrm{bound}_{\mathrm{NN}}$ to obtain bounds $[\ell, h]$ on $N$ for all assignments represented by $\mathcal{B}$.
    If $[\ell,h] \subseteq S$, then every input in $\mathcal{B}$ still satisfies $\varphi$, so no contingency in this region can falsify it, and we \emph{prune} $\mathcal{B}$. If $[\ell,h] \cap S = \emptyset$, every input represented by $\mathcal{B}$ falsifies $\varphi$, so we \emph{confirm} $\mathcal{B}$. 
    Resolving all remaining \textsc{undecided} variables to \textsc{natural} yields the smallest witness contingency represented by $\mathcal{B}$, which we retain only if it is smaller than the best one found for $C$ so far.
Otherwise, the bounds are inconclusive, and we \emph{refine} $\mathcal{B}$.

	\item \textbf{Refine.} To refine an inconclusive region, we select one \textsc{undecided} variable using a branching heuristic (see \secref{sec:exp}, \appref{app:branchselection}), and split $\mathcal{B}$ into two children: one where that variable is resolved to \textsc{natural}, and one where it is resolved to \textsc{contingency}. Both children are added back to $Q$. Once $Q$ is empty, if a witness contingency was found for $C$, we add the pair $(C,W^*)$ to $\mathcal{C}^*$.
\end{enumerate}

This yields two guarantees: (i) candidate ordering enforces AC3, and
(ii) continuing the search after the first witness contingency while pruning regions
whose committed contingency is no smaller than the current best witness
yields a minimum-cardinality contingency
(see \remref{rem:responsibility} in \appref{app:approach}).

\begin{remark}[Restricted intervention set]
For clarity, \algref{alg:causexbab} assumes that all endogenous variables
may appear in candidate causes. In practice, cause interventions may
instead be restricted to a user-specified set
$I\subseteq\vec V$, for example by excluding administrative
variables. Candidate causes are then drawn from $I$, while the remaining
variables, whether inside or outside $I$, may still serve as contingency
variables. \end{remark}

\newcommand{\treescale}{0.57}
\newcommand{\treefontbase}{\large}
\newcommand{\treefontnode}{\normalsize}
\newcommand{\treefontlabel}{\normalsize}

\tikzset{
  root/.style={draw, circle, fill=black, minimum size=2mm, inner sep=0pt},
  base/.style={draw=treeBaseEdge, rounded corners, line width=1.0pt, fill=treeBaseFill, font=\treefontbase},
  skip/.style={draw=treeSkipEdge, rounded corners, fill=treeSkipFill, font=\treefontbase, dashed},
  branch/.style={draw=treeBranchEdge, line width=1.0pt, circle, fill=treeBranchFill, font=\treefontnode, inner sep=1pt},
  confirmed/.style={draw=treeConfirmedEdge, line width=1.0pt, circle, fill=treeConfirmedFill, font=\treefontnode},
  pruned/.style={draw=treePrunedEdge, line width=1.0pt, circle, fill=treePrunedFill, font=\treefontnode},
  edge from parent/.style={draw, line width=1.8pt, -latex},
  every node/.style={align=center}
}

\paragraph{Algorithm illustration on the running example in \figref{fig:trees}.} Recall $S = [\tau, \infty)$ with $\tau = 0.5$ for this example.
Initially, $X_1$ is fixed to \textcolor{treeBaseEdge}{\textsc{cause}} and every other variable is left \textcolor{undecided}{\textsc{undecided}}. The initial bounds $[0.08, 1.00]$ are inconclusive (neither $[\ell,h]\subseteq S$ nor $[\ell,h]\cap S=\emptyset$), so the search branches on $X_2$, and both children branch again on $X_3$ for the same reason. All four resulting leaves have bounds entirely in $S$ and satisfy $\varphi$, so $\{X_1\}$ is pruned entirely.
For $\{X_3\}$ and $\{X_4\}$, fixing the singleton leaves nothing else undetermined, so both bounds resolve to a single point without branching: $0.41$ (outside $S$) confirms $\{X_3\}$ as a cause, while $\{X_4\}$'s point lies in $S$ and is pruned. $\{X_5\}$ (bounds $[0.35, 0.83]$) and $\{X_2\}$ are both inconclusive 
and branch once on $X_4$: for $\{X_5\}$ both children's bounds lie in $S$ and are pruned, while for $\{X_2\}$ the \textcolor{treeNaturalEdge}{\textsc{natural}} child is pruned but the \textcolor{treeContingencyEdge}{\textsc{contingency}} child's bounds lie outside $S$, falsifying $\varphi$ and confirming $\{X_2\}$ as a second cause with witness contingency $\{X_4\}$. Concretely, intervening on negative cashflow alone does not flip the classification unless unsecured exposure is also 
held
at its actual value $X_4{=}0$, the contingency. This leaves $\{X_2\}$ and $\{X_3\}$ as confirmed causes of size $k=1$.

\begin{figure}[t]
\centering
\scalebox{\treescale}{
\begin{tikzpicture}[
  level distance=12mm,
  level 1/.style={sibling distance=28mm},
  level 2/.style={sibling distance=14mm},
  level 3/.style={sibling distance=16mm},
  level 4/.style={sibling distance=8mm},
]
\node[root] {}
  child { node[base] {$\{X_1\}$}
    child { node[branch] {$X_2$}
      child { node[branch] {$X_3$} edge from parent[draw=treeNaturalEdge]
        child { node[pruned] {P} edge from parent[draw=treeNaturalEdge] }
        child { node[pruned] {P} edge from parent[draw=treeContingencyEdge] }
      }
      child { node[branch] {$X_3$} edge from parent[draw=treeContingencyEdge]
        child { node[pruned] {P} edge from parent[draw=treeNaturalEdge] }
        child { node[pruned] {P} edge from parent[draw=treeContingencyEdge] }
      }
    }
  }
  child { node[base] {$\{X_2\}$}
    child { node[branch] {$X_4$}
      child { node[pruned] {P} edge from parent[draw=treeNaturalEdge] }
      child { node[confirmed] {C} edge from parent[draw=treeContingencyEdge] }
    }
  }
  child { node[base] {$\{X_3\}$}
    child { node[confirmed] {C} }
  }
  child { node[base] {$\{X_4\}$}
    child { node[pruned] {P} }
  }
  child { node[base] {$\{X_5\}$}
    child { node[branch] {$X_4$}
      child { node[pruned] {P} edge from parent[draw=treeNaturalEdge] }
      child { node[pruned] {P} edge from parent[draw=treeContingencyEdge] }
    }
  };
\end{tikzpicture}
}
\caption{\causexbab search tree for $k=1$ on our running example. Each root child (e.g.\ $\{X_1\}$) is a candidate \textcolor{treeBaseEdge}{\textsc{cause}} (pink). At each branch, the left child resolves the branching variable to \textcolor{treeNaturalEdge}{\textsc{natural}} (blue) and the right child to \textcolor{treeContingencyEdge}{\textsc{contingency}} (orange). \textcolor{treeConfirmedEdge}{C = confirmed}, \textcolor{treePrunedEdge}{P = pruned}.}
\label{fig:trees}
\end{figure}

Every candidate of size $k=2,\ldots,5$ is either a superset of $\{X_2\}$ or $\{X_3\}$ and skipped, or checked and pruned, so no cause larger than $k=1$ exists for our running example: $\{X_3\}$ (\emph{high expenses}) and $\{X_2\}$ (\emph{negative cashflow}) with witness contingency $\{X_4\}$ (\emph{unsecured exposure}) are the only two minimal causes. The minimal sufficient set for this instance is the single set $\{X_2, X_3\}$, exactly the union of the two causes. Computing actual causes is more informative here: it identifies $X_2$ and $X_3$ as independently responsible, each with its own witness contingency, rather than only the combined sufficient set, which does not reveal that either feature alone already suffices to flip the classification. 

\appref{app:running_example_orig} works through a different context on the same SCM that does produce a cause of size $k=2$.

\paragraph{Correctness and Termination.}
\begin{restatable}{theorem}{thmcorrectness}
\label{thm:correctness}
Given an SCM $\mathcal{M}$, NN $N$, and $N$'s actual input $\vec x$ with $N(\vec x) \in S$, \algref{alg:causexbab} terminates. It returns exactly the set of all actual causes of $\varphi$ (characterized by AC1--AC3).\footnote{If the maximum cause size is capped at $k_{\max} < |\vec V|$, this guarantee holds only for actual causes with $|C| \leq k_{\max}$. See \appref{app:proofs}, \remref{rem:kmax}.}
For each returned cause, it also returns a minimum-cardinality contingency.
\end{restatable}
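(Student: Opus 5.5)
The proof has four parts: termination, soundness of the bounds, completeness and correctness of AC2 for each fixed candidate $C$, and finally the AC3 bookkeeping of the outer loop.

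\textbf{Termination.} The outer loops range over finitely many subsets $C\subseteq\vec V$. For a fixed $C$, every region is given by a partial role assignment on $\vec V\setminus C$. Each branching step resolves one \textsc{undecided} variable, so regions form a binary tree of depth at most $|\vec V\setminus C|$. A region with no \textsc{undecided} variables has every variable either fixed or following its structural equation. Its SCM bounds are therefore a single Boolean point, because the relaxations are exact on Boolean inputs, and IBP on a point gives a point $[\ell,\ell]$. Such a point lies either inside or outside $S$, so the region is pruned or confirmed and never refined. Hence each queue empties after at most $2^{|\vec V\setminus C|+1}$ pops.

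\textbf{Soundness of the bounds.} I would show by induction in topological order that, for every region $\mathcal B$ and every completion of its \textsc{undecided} variables to \textsc{natural}/\textsc{contingency}, each variable's value in the induced counterfactual assignment $\tilde{\vec x}$ lies in $[\ell_X,h_X]$.
\begin{itemize}
\item For \textsc{cause} and \textsc{contingency} variables this is immediate, since their values are fixed.
\item For \textsc{natural} variables it follows because the multilinear relaxation agrees with $f_j$ on Boolean points and its range over the parent box is attained at corners. The true Boolean parent values lie in the parent intervals by induction.
\item For \textsc{undecided} variables it follows from the hull construction.
\end{itemize}
Soundness of IBP then gives $N(\tilde{\vec x})\in[\ell,h]$.

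\textbf{AC2 for a fixed $C$.} Two consequences of soundness drive this step:
\begin{itemize}
\item If $[\ell,h]\subseteq S$, no completion falsifies $\varphi$, so pruning loses no witnesses.
\item If $[\ell,h]\cap S=\emptyset$, every completion falsifies $\varphi$. In particular the completion with all \textsc{undecided} variables set to \textsc{natural} does, and its contingency $W$ is exactly the committed set $\mathrm{contingency}(\mathcal B)$. So every recorded $W^*$ is a genuine witness.
\end{itemize}
For completeness, note that the children of a split partition the parent's completions, so the leaves of the explored tree cover all $2^{|\vec V\setminus C|}$ contingencies, apart from regions removed by the cardinality prune. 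I would then show that $W^*\neq\bot$ at the end if and only if some contingency witnesses AC2 for $C$ (with $\vec x'_C$ the complement, as in the setup).

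For minimum cardinality, suppose $W_{\min}$ is a smallest witness. Follow the unique root-to-leaf path of regions containing $W_{\min}$. Along this path the committed contingency sets are subsets of $W_{\min}$, so they have size at most $|W_{\min}|$. I would argue by contradiction that the final $W^*$ satisfies $|W^*|\le|W_{\min}|$:
\begin{itemize}
\item The path cannot end in a bound-prune, since that region contains the witness $W_{\min}$.
\item If it ends in the cardinality prune, then $|W^*|\le|W|\le|W_{\min}|$ already held at that time.
\item If it ends in a confirmation, the recorded $W$ is a subset of $W_{\min}$, so it is also a minimum witness.
\end{itemize}
Since $W^*$ only ever decreases, the final $W^*$ has size $|W_{\min}|$.

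\textbf{AC3 and the main obstacle.} By induction on $k$, I would prove that after processing size $k$, $\mathcal C^*$ contains exactly the actual causes of size at most $k$.
\begin{itemize}
\item A set $C$ satisfying AC2 is minimal iff no strict subset satisfies AC2. AC1 holds by assumption.
\item All strict subsets of $C$ have smaller size. By the induction hypothesis, any AC2-satisfying strict subset would either be in $\mathcal C^*$ itself or contain a minimal AC2 subset that is in $\mathcal C^*$.
\item So the skip test fires exactly for the non-minimal sets, and the AC2 characterization above handles the rest.
\end{itemize}
The main obstacle is AC3's precise meaning. Minimality among AC2-satisfying sets needs the fact that any AC2 subset contains a minimal one, which holds by finiteness. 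One must also check that the algorithm's fixed choice $\vec x'=\neg\vec x_C$ matches $\vec x'\neq\vec x$ under AC3. If some coordinates of $\vec x'$ equal $\vec x$, then a smaller set already satisfies AC2 (move those variables into $W$), so minimal causes always use full complements. I expect this last argument to need the most care.
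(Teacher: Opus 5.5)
Your proposal is correct and follows the paper's proof essentially step for step. Both use finite-depth termination, the all-\textsc{natural} completion of a confirmed region as the AC2 witness, and induction on $|C|$ with the inclusion-minimal-subset and full-complement arguments for AC3 and completeness. Your additional details fill in points the paper only asserts without changing the route: fully resolved regions give point bounds and so are never refined, bound soundness follows by topological induction, and the minimum-cardinality claim follows from tracking the root-to-leaf path of $W_{\min}$.
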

\noindent Please refer to the \appref{app:proofs} for the full proof of \thmref{thm:correctness}.

\paragraph{Complexity.}
For a candidate cause $C$ with $|C|=k$, the search resolves
$|\vec V|-k$ non-cause variables. In the worst case, the resulting
binary search tree contains
$\sum_{i=0}^{|\vec V|-k}2^i
=2^{|\vec V|-k+1}-1$
regions, including the initial region. Hence, the overall number of
regions is at most
$\sum_{k=1}^{|\vec V|}
\binom{|\vec V|}{k}
\left(2^{|\vec V|-k+1}-1\right)
<2\cdot 3^{|\vec V|}$,
which is $O(3^{|\vec V|})$. See \appref{app:complexity} for details.

\section{Experiments}
\label{sec:exp}

We evaluate \causexbab on both synthetic benchmarks and a
case study. The synthetic benchmarks allow us to systematically vary graph
size, causal structure, maximum cause size, and NN capacity,
while the case study demonstrates applicability to a real decision-making
setting.
Because established benchmarks for HP-cause computation in NNs
with structured causal inputs are limited, we generate the synthetic
benchmarks described below.

\subsection{Experimental Setup}

We ran all experiments on an NVIDIA L40S GPU, an AMD EPYC 9375F CPU (32 cores), 192\,GB RAM, and Ubuntu 24.04.4 LTS. We compare \causexbab against brute-force enumeration (\brutef), a search-based baseline (\aci)~\cite{Reyd2025}, and a constraint-based (\ilp) formulation~\cite{Ibrahim2020}
, encoding ReLU via Big-M constraints 
and solved with SCIP~\cite{Hojny2025}, using a \textbf{180s timeout} per run.
For fairness, \brutef includes the same superset pruning, batched GPU evaluation, and JIT-compiled SCM/NN evaluation used by \causexbab. Unless stated otherwise, \causexbab uses the \texttt{width\_large} branching heuristic; \appref{app:branchselection} reports on alternative heuristics.

\paragraph{SCM Generation.}
Following the benchmark protocol of \citet{Aryan2026}, we generate
Barab\'{a}si--Albert (BA)-style preferential-attachment
DAGs~\cite{Barabasi1999} by adding nodes sequentially and allowing each
new node to select only earlier nodes as parents. This yields a small
number of hub variables and many sparsely connected variables.
Each endogenous variable is assigned an AND or OR structural equation with equal probability ($p=0.5$), and each incoming edge is independently negated with probability $p=0.2$.
We also run the same experiments on DAGs generated using the
Erd\H{o}s--R\'{e}nyi (ER) model~\cite{Erdos1959} and observe consistent
runtime trends (\appref{app:er_results}).

\paragraph{Neural Network Generation.}
To obtain a nontrivial task over SCM-generated inputs, we use a
frozen, randomly initialized multilayer perceptron (MLP) with two hidden
layers as a teacher network~\citep{Goldt2020}.
We normalize its output to $[0,1]$, add Gaussian
noise, and train a student NN to approximate it. 
Given the
actual output $y$, we set the acceptance set to
$S=[y-\varepsilon,y+\varepsilon]$, with $\varepsilon=0.3$. Thus, an
intervention yielding output $y'$ falsifies $\varphi$ if
$|y'-y|>\varepsilon$.

Each configuration is evaluated on three independently generated graphs
and three sampled contexts per graph, yielding nine runs per configuration.

\paragraph{Validation against exhaustive search.}
On every instance completed by both methods, \causexbab and exhaustive
\brutef return exactly the same set of minimal causes.

\subsection{Results}
\label{sec:results}
We first evaluate how \causexbab scales with the number of SCM nodes
relative to \brutef, \aci, and \ilp, before reporting a series of
additional experiments.

\begin{figure}[t]
\centering
\resizebox{1.0\linewidth}{!}{
\includegraphics{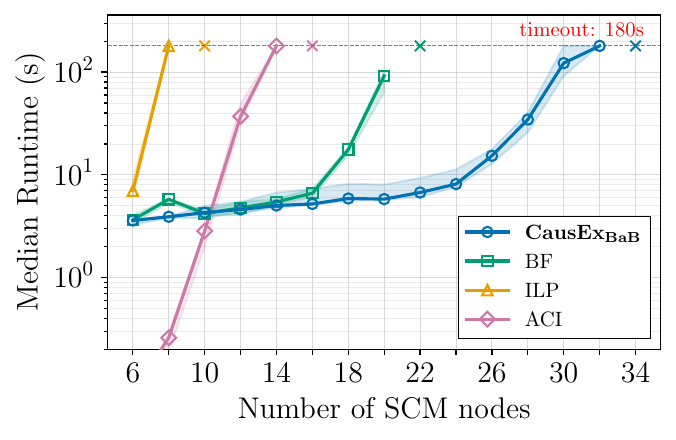}}
\caption{Runtime as a function of the number of SCM nodes. Lines show
median runtimes over nine runs per configuration, with shaded regions
indicating the interquartile range. A point is marked with $\times$ if
that configuration and all subsequent ones time out for that method.}
\label{fig:nodes}
\end{figure}

\paragraph{Number of SCM nodes (\figref{fig:nodes}).}
We vary the number of nodes from 6 to 34 in increments of 2, compute all
minimal causes, and use the \texttt{m4}
architecture\footnote{\tabref{tab:network_configs} (\appref{app:networksize}) lists the full range of tested architectures.} ($256\times128\times64\times32$). \causexbab completes all
runs up to $n=28$, with median runtime increasing from 3.6s to 34.5s.
At $n=30$, 33\% of runs time out; at $n=32$, 88.9\% time out; and all runs time out from $n=34$.
\brutef completes all runs up to $n=20$ but times out from $n=22$
onward. At $n=20$, its median runtime is 92.0s, vs.\ 5.8s
for \causexbab. \ilp times out on one run at $n=6$, on 89\% of runs at
$n=8$, and on all runs from $n=10$. \aci is fastest for small graphs
($n\leq10$, median below 3s), but its runtime rises sharply at $n=12$
and it times out on the majority of runs from $n=14$ and on all runs from
$n=16$. Wilcoxon signed-rank tests show that \causexbab is significantly
faster than \ilp and \aci at almost every tested $n$ ($p<0.05$), while
the difference from \brutef becomes significant from $n=18$ onward.
Across the 115 completed \causexbab runs, 68.7\% yield multiple minimal
causes, 13.0\% yield exactly one, and 18.3\% yield none.

\paragraph{Further Results.}
Additional experiments confirm the importance of causal structure and
show that \causexbab remains robust across network, graph, and search
settings.

\noindent
\textit{Independence baseline.} Treating input features as causally independent finds $59\%$ more causes that are $20\%$ larger on average, recovers only $33\%$ of the true minimal causes, and $74.3\%$ of what it finds is valid but not minimal under the \ac{scm} (\appref{app:independence}).

\noindent
\textit{Neural network size.}
Varying the NN architecture from 96 to 8160 hidden ReLU units while fixing the SCM at 10 nodes has little
effect on the runtimes of \causexbab, \brutef, and \aci, whereas \ilp
times out on the majority of runs (\appref{app:networksize}).

\noindent
\textit{Scalability.} Restricting the maximum cause size searched to $k_{\max}=8$, \causexbab completes almost all runs up to $n=38$ and still solves instances at $n=56$ (\appref{app:scalability}).

\noindent
\textit{Connected components.} Runtime is driven primarily by graph size, while the number of components has a smaller, non-monotone effect that is largely explained by variation in the number of minimal causes to enumerate (\appref{app:components}).

\noindent
\textit{Sparsity.} Runtime shows no systematic dependence on graph density, with median runtimes remaining within $26$--$47$s across $m{=}1,\ldots,8$ (\appref{app:sparsity}).

\noindent
\textit{Branch selection.} The default \texttt{width\_large} heuristic clearly outperforms the alternatives, resolving $51.2\%$ of undecided variables per pruned branch and avoiding all timeouts at the largest tested size (\appref{app:branchselection}).

\noindent
\textit{Maximum cause size.} Increasing $k_{\max}$ from $1$ to $8$ leaves \causexbab's median runtime essentially unchanged at $5.3$--$5.5$s, whereas the baselines become slower (\appref{app:maxk}).

\noindent
\textit{Cause sizes.} Minimal causes span a broad range of sizes; restricting the search to $k_{\max}{=}8$ would retain only $59.4\%$ of the causes found by unrestricted search (\appref{app:causesizes}).

\subsection{Case Study: SNAP Quality-Control Targeting}
As a case study, we build an \ac{scm} from the Certification of Eligible Households Regulation~\cite{CFR273} of the Supplemental Nutrition Assistance Program \cite{Leftin2026} and use it to compute causes for decisions made by an \ac{nn}, trained on USDA's SNAP Quality Control data \cite{USDAFNS2026}, that flags certified households for benefit-error reviews. \causexbab computes actual causes for every flagged household in a median runtime of $4.60$s, while \brutef takes a median runtime of $43.24$s and \ilp and \aci time out on $3/11$ and $10/11$ instances respectively. Treating the inputs as independent instead of respecting the \ac{scm} raises the median number of causes found from $23$ to $50$, and $14.9\%$ of them turn out to be spurious. See \appref{app:casestudy}.

\section{Related Work}
\label{sec:related}

 HP actual causality has been extensively studied in discrete, non-neural models~\cite{Leitner-Fischer2013,Coenen2022,Ibrahim2020}, but computing HP causes for \ac{nn} predictions under structured causal inputs remains underexplored.
Closest to our setting, \citet{Ibrahim2020} compute exact HP causes via an ILP encoding that scales poorly with neural network components. \citet{Reyd2025} search heuristically, without formal guarantees and possibly returning incomplete or non-minimal sets. \citet{Chockler2025} compute multiple actual causes for black-box image classifiers via repeated querying, but assume independent features. \citet{Aryan2026} use an incomplete continuous gradient-based search. We instead provide formal guarantees along three dimensions: (i) minimality of the computed causes, (ii) minimality and sufficiency of the used contingencies, and (iii) computation of all valid HP actual causes, over an SCM-structured
input space, while scaling to substantially larger instances than the baselines.

Feature-attribution methods such as LIME~\cite{Ribeiro2016} and SHAP~\cite{Lundberg2017},
  and causal variants that relax the independence assumption via a given causal
  graph~\cite{Heskes2020}, return importance scores rather than HP actual causes.
\emph{Minimal abductive explanations}~\cite{Bassan2023} are a dual notion, where a fixed value alone guarantees the prediction rather than a change alone altering it.

\section{Discussion}
\label{sec:discussion}

We keep the SCM separate from the NN because the two components serve
different roles. The SCM provides explicit intervention semantics and
domain knowledge, while the NN remains the fixed predictive model whose
output is being explained. Although the NN may learn statistical
dependencies that overlap with those encoded by the SCM, these
dependencies are not explicitly represented as causal mechanisms.
The two components could instead be combined in a neural SCM, but this
would require a model specifically learned as an SCM
rather than an arbitrary pretrained predictor.

\paragraph{Beyond Boolean SCMs.}

 For continuous SCMs, verification methods can still reason soundly about continuous input regions, but exact enumeration of all minimal HP causes no longer reduces to a finite search over assignments. One must specify the admissible intervention space and the relevant notion of minimality. Extending our guarantees to that setting is therefore an important but separate problem. Towards this, we experimented with tighter bounds via $\beta$-CROWN~\cite{Wang2021}. Each iteration's bound computation took substantially longer to run than IBP, and did not improve overall runtime in our Boolean setting. We conjecture it may help once inputs are continuous.

\section{Conclusion}
\label{sec:conc}

We formalized NN explanation under structured input dependencies as the computation of Halpern--Pearl actual causes over SCM-constrained inputs. 
We introduced \causexbab, a sound and complete algorithm that returns all minimal actual causes together with minimum-cardinality witness contingencies. Across SCM--NN benchmarks, \causexbab substantially outperforms brute-force, ILP-based, and heuristic search baselines
as graph size grows.
Future work includes an extension to continuous domains as well as bounding the maximum cause size 
automatically from the graph structure,
an evaluation of our approach on larger networks and additional real-world datasets, and integrating gradient-based candidate generation with branch-and-bound refinement.

\section*{Acknowledgements}
The work of Leue, Azeem and Strobel was partially funded through the DFG research grant LE 1342/4 \enquote{SCADNet - Structural Causal Analysis of Deep Neural Networks}. 

\bibliography{references}

\clearpage

\appendix

\section{Detailed Preliminaries}
\label{app:prelims}
\subsection{\acp{scm}}

We use \aclp{scm} to represent and reason about the causal relationships between variables in a system.

An \ac{scm} is a tuple $\mathcal{M} = (\vec{U}, \vec{V}, \mathcal{F})$~\cite{Pearl2009} with exogenous variables $\vec{U} = \{U_1, \dots, U_d\}$, endogenous variables $\vec{V} = \{X_1, \dots, X_d\}$, and structural equations $\mathcal{F} = \{f_1, \dots, f_d\}$ such that

\begin{equation}
	X_j := f_j(\mathrm{PA}_j, U_j), \quad j = 1, \dots, d,
\end{equation}

where $\mathrm{PA}_j \subseteq \vec{V} \setminus \{X_j\}$ denotes the set of endogenous parents (direct causes) of $X_j$.

This formulation defines how each variable is generated from its direct causes and some exogenous influence. It captures both the causal structure (via the dependencies among variables) and the functional relationships that determine how each variable is computed from its causes.

Given a specific assignment $\vec{u} = (u_1, \dots, u_d)$ to the exogenous variables, called a \emph{context}, the values of the endogenous variables $\vec{V}$ are determined by recursive application of the structural equations.

Each SCM corresponds to a \emph{directed acyclic graph} (DAG) where each edge represents a direct causal influence. The nodes are represented by the variables $X_j$ while the (directed) edges are represented by $X_i \rightarrow X_j \Leftrightarrow X_i \in \mathrm{PA}_j$.

\paragraph{Binary \acp{scm}.}
A binary \ac{scm} is a \ac{scm} in which all endogenous variables take values in the Boolean domain, i.e., $X_j \in \{0,1\}$ for $j = 1,\dots,d$. The exogenous variables $U_j$ need not be binary. Each structural equation $f_j$ is a Boolean function, typically expressed using the operators $\land$, $\lor$, and~$\lnot$.

\paragraph{Interventions.}
An intervention refers to setting a variable to a specific value, independent of its usual causes. Given an endogenous variable $X := f_X(\mathrm{PA}_X, U_X)$, the intervention $X \leftarrow x$ modifies the model by replacing this equation with the assignment $X := x$. This enables us to quantify the causal effect of a variable by analyzing how the system behaves when it is manipulated externally, regardless of its original causes.

\paragraph{Counterfactuals.}
Interventions modify the \ac{scm} so that we can analyze how changes to one variable affect other variables at the population level. In contrast, counterfactuals explain "What would have happened if~\dots?", focusing on individual-level scenarios. They are computed by first determining the values of the exogenous variables from the observed effect and then applying an intervention to the model.

\subsection{Halpern and Pearl's Actual Causality}
SCMs allow us to compute the effects of interventions and simulate alternative scenarios but do not specify what counts as a cause in a particular instance, especially when multiple variables influence the effect. 
 Halpern and Pearl's definition of actual causality extends this framework by providing criteria to identify a minimal, explanatory cause of a specific observed event. This enables reasoning about responsibility and supports the generation of fine-grained causal explanations.

We adopt the modified definition of actual causality~\cite{Halpern2015}, which avoids known problems with preemption and spurious causes at lower complexity than earlier versions of the definition.

\paragraph{Definition.}
We denote the set of endogenous variables by $\vec{V} = \{X_1, \dots, X_d\}$. In the following, we use Halpern and Pearl's notation, where subsets of $\vec{V}$ are used to define candidate causes and counterfactual contingencies.

A \emph{context} $\vec{u}$ is a specific assignment to all exogenous variables in a SCM. Fixing a context determines the values of all endogenous variables via the structural equations, and therefore defines the actual world in which causal relationships are evaluated.

Let $(\mathcal{M}, \vec{u})$ be a causal model with context $\vec{u}$. A set of variables $\vec{X}$ taking values $\vec{x}$ is an \emph{actual cause} of an outcome $\varphi$ in $(\mathcal{M}, \vec{u})$ if the following conditions are satisfied:

\begin{enumerate}
    \item[]\textbf{AC1}
    $(\mathcal{M}, \vec{u}) \models \vec{X} = \vec{x}$ and $(\mathcal{M}, \vec{u}) \models \varphi$.  
    That is, $\vec{X} = \vec{x}$ is only considered a cause of $\varphi$ if both the assignment and the outcome actually occur in the context $\vec{u}$.

    \item[]\textbf{AC2}
    There exists a set $\vec{W} \subseteq \vec{V} \setminus \vec{X}$ of endogenous variables and alternative values $\vec{x}' \neq \vec{x}$ for $\vec{X}$, such that if $\vec{w}$ denotes the actual values of the variables in $\vec{W}$ in context $(\mathcal{M}, \vec{u})$, then
    \begin{equation*}
    	(\mathcal{M}, \vec{u}) \models [\vec{X} \leftarrow \vec{x}',\, \vec{W} \leftarrow \vec{w}]\, \lnot \varphi.
    \end{equation*}
    This means that by holding a set of variables fixed at their actual values, changing $\vec{X}$ alone is enough to prevent $\varphi$ from occurring.

The triple $(\vec{W}, \vec{w}, \vec{x}')$ serves as a \emph{witness} showing that $\vec{X} \leftarrow \vec{x}'$ satisfies condition AC2 and thus qualifies as a cause of $\varphi$.

    \item[]\textbf{AC3} 
    $\vec{X}$ is minimal. No subset of $\vec{X}$ satisfies both AC1 and AC2.
    
\end{enumerate}
The modified definition restricts contingencies to the variables' actual values in the given context $\vec{u}$, rather than allowing arbitrary counterfactual settings. This prevents a variable from being designated a cause based only on a constructed contingency that never actually occurred, and instead focuses on whether the candidate cause made a difference under the conditions that actually held.

\section{Running Example: Additional Context}
\label{app:running_example_orig}

We show \causexbab on another context of our running example (\secref{subsec:runningexample}, \figref{fig:loan_scm}): $\vec u = (0,1,1,1,1)$. This induces the factual assignment $\vec x = (0,0,1,1,1)$\footnote{Computed in topological order: $X_1=U_1=0$, $X_3=X_1\oplus U_3=1$, $X_5=U_5=1$, $X_2=(X_3\wedge\lnot X_1)\oplus U_2=0$, $X_4=(X_2\wedge\lnot X_5)\oplus U_4=1$.}: the applicant has no high income, no negative cashflow, high expenses, unsecured exposure, and a guarantor. The predictor outputs $y = N(\vec x) \ge \tau$, so the bank classifies the applicant as \emph{high risk}. As before, our goal is to find minimal sets of features $\vec X \subseteq \{X_1, \ldots, X_5\}$ that are actual causes of this classification, that is, interventions that, under some contingency, would push the prediction below $\tau$ and flip the applicant to \emph{low risk}.

We illustrate this procedure on this context (\figref{fig:trees-orig}). Initially, $X_1$ is fixed to \textsc{cause} and every other variable is left \textsc{undecided}. The initial bounds, $[0.07, 0.96]$, are inconclusive, so the search branches on $X_3$. Both resulting children ($X_3$ is \textsc{contingency} and $X_3$ is \textsc{natural}) are inconclusive. Branching on $X_2$ in the branch where $X_3$ is \textsc{natural} shows that both children falsify $\varphi$, confirming $\{X_1\}$ as a cause. We select the one with the smaller contingency.
In the branch where $X_3$ is \textsc{contingency}, both children of $X_2$ satisfy $\varphi$ and are pruned. For $\{X_2\}$, the bounds already resolve to a single point, $0.97$, since fixing $X_2$ leaves nothing else undetermined, and this point satisfies $\varphi$, so $\{X_2\}$ is pruned without any branching. $\{X_3\}$ is inconclusive and branches once on $X_2$, but both children satisfy $\varphi$ and are pruned. $\{X_4\}$ and $\{X_5\}$ resolve to a single point like $\{X_2\}$ and are pruned the same way. This leaves $\{X_1\}$ as the only confirmed cause of size $k=1$.

\renewcommand{\treescale}{0.6}
\begin{figure}[htb]
\centering
\scalebox{\treescale}{
\begin{tikzpicture}[
  level distance=12mm,
  level 1/.style={sibling distance=16mm},
  level 2/.style={sibling distance=14mm},
  level 3/.style={sibling distance=18mm},
  level 4/.style={sibling distance=8mm},
]
\node[root] {}
  child { node[base] {$\{X_1\}$}
    child { node[branch] {$X_3$}
      child { node[branch] {$X_2$} edge from parent[draw=treeNaturalEdge]
        child { node[confirmed] {C} edge from parent[draw=treeNaturalEdge] }
        child { node[confirmed] {C} edge from parent[draw=treeContingencyEdge] }
      }
      child { node[branch] {$X_2$} edge from parent[draw=treeContingencyEdge]
        child { node[pruned] {P} edge from parent[draw=treeNaturalEdge] }
        child { node[pruned] {P} edge from parent[draw=treeContingencyEdge] }
      }
    }
  }
  child { node[base] {$\{X_2\}$}
    child { node[pruned] {P} }
  }
  child { node[base] {$\{X_3\}$}
    child { node[branch] {$X_2$}
      child { node[pruned] {P} edge from parent[draw=treeNaturalEdge] }
      child { node[pruned] {P} edge from parent[draw=treeContingencyEdge] }
    }
  }
  child { node[base] {$\{X_4\}$}
    child { node[pruned] {P} }
  }
  child { node[base] {$\{X_5\}$}
    child { node[pruned] {P} }
  };
\end{tikzpicture}
}

\vspace{4mm}

\scalebox{\treescale}{
\begin{tikzpicture}[
  level distance=15mm,
  level 1/.style={sibling distance=20mm},
  level 2/.style={sibling distance=16mm},
  level 3/.style={sibling distance=10mm},
  level 4/.style={sibling distance=8mm},
]
\node[root] {}
  child { node[skip] {$\{X_1,\ldots\}$} }
  child { node[base] {$\{X_2,X_3\}$}
    child { node[pruned] {P} }
  }
  child { node[base] {$\{X_2,X_4\}$}
    child { node[pruned] {P} }
  }
  child { node[base] {$\{X_2,X_5\}$}
    child { node[pruned] {P} }
  }
  child { node[base] {$\{X_3,X_4\}$}
    child { node[branch] {$X_2$}
      child { node[pruned] {P} edge from parent[draw=treeNaturalEdge] }
      child { node[confirmed] {C} edge from parent[draw=treeContingencyEdge] }
    }
  }
  child { node[base] {$\{X_3,X_5\}$}
    child { node[branch] {$X_2$}
      child { node[branch] {$X_4$} edge from parent[draw=treeNaturalEdge]
        child { node[pruned] {P} edge from parent[draw=treeNaturalEdge] }
        child { node[pruned] {P} edge from parent[draw=treeContingencyEdge] }
      }
      child { node[pruned] {P} edge from parent[draw=treeContingencyEdge] }
    }
  }
  child { node[base] {$\{X_4,X_5\}$}
    child { node[pruned] {P} }
  };
\end{tikzpicture}
}
\caption{\causexbab search trees for $k=1$ (top) and $k=2$ (bottom) on this context. At each branch, the left child resolves the branching variable to \textcolor{treeNaturalEdge}{\textsc{natural}} (blue) and the right child to \textcolor{treeContingencyEdge}{\textsc{contingency}} (orange). \textcolor{treeConfirmedEdge}{C = confirmed}, \textcolor{treePrunedEdge}{P = pruned}. In the $k=2$ tree, candidates containing $X_1$ are skipped as supersets of the already-confirmed $\{X_1\}$.}
\label{fig:trees-orig}
\end{figure}

Enumerating potential causes for $k=2$, candidates containing $X_1$ are skipped to satisfy AC3 (minimality). $\{X_2,X_3\}$, $\{X_2,X_4\}$, and $\{X_2,X_5\}$ are each conclusive and pruned immediately, without branching. $\{X_3,X_4\}$ is inconclusive and branches once on $X_2$: the \textsc{natural} child satisfies $\varphi$ and is pruned, while the \textsc{contingency} child falsifies $\varphi$, confirming $\{X_3,X_4\}$ as a second cause with witness contingency $\{X_2\}$. $\{X_3,X_5\}$ is inconclusive and also branches on $X_2$, where its \textsc{natural} child needs a further branch on $X_4$ to be conclusive. All resulting children satisfy $\varphi$, so $\{X_3,X_5\}$ is pruned entirely. $\{X_4,X_5\}$ is conclusive and pruned immediately. This leaves two minimal causes for this context: $\{X_1\}$ (\emph{high income}) and $\{X_3,X_4\}$ (\emph{high expenses}, \emph{unsecured exposure}) with witness contingency $\{X_2\}$ (\emph{negative cashflow}).

If the five features are instead treated as causally independent (no edges between $X_1, \ldots, X_5$), the same modified Halpern--Pearl definition yields three minimal causes: $\{X_1,X_3\}$, $\{X_1,X_4\}$, and $\{X_3,X_4\}$, each without a contingency, since \textsc{natural} and \textsc{contingency} coincide once every variable is a root. The minimal sufficient sets for the same instance, a distinct dual notion (a set whose fixed value alone guarantees the classification regardless of the rest), happen to coincide exactly with these three sets.
Modeling the causal structure changes which causes are found. $\{X_1\}$ alone qualifies as a cause only because its effect propagates through $X_3,X_2,X_4$, whereas under independence no single feature suffices. Methods that treat inputs as independent will therefore often only find larger causes, missing that a smaller one can already be responsible once its downstream propagation is accounted for.

\section{Additional Details on the Approach}
\label{app:approach}

\begin{remark}\label{rem:responsibility}
Halpern's \emph{degree of responsibility} of $C$ as a cause of $\varphi$ is defined as $1/(N+1)$, where $N$ is the size of the smallest witnessing contingency~\cite{Halpern2016}. Since Step~2 searches for exactly this minimal $N$, $\deg(C)$ is obtained directly as a byproduct of the search. The smaller the contingency, the higher the responsibility.
\end{remark}

\subsection{Proof of Theorem~\ref{thm:correctness}}
\label{app:proofs}

\thmcorrectness*
\begin{proof}
\textit{Termination.} The algorithm considers finitely many candidate cause sets. For a fixed candidate $C$, every recursive branch resolves one previously \textsc{undecided} variable by assigning it either the \textsc{natural} or \textsc{contingency} role, so every root-to-leaf path has length at most $|\vec V|-|C|$, and the search tree is finite. Therefore \algref{alg:causexbab} terminates.

\smallskip

\noindent\textit{Correctness.}
\textit{AC1.} Every candidate $C$ is evaluated in the actual context $\vec u$, where $C$ trivially takes its actual value, and $\varphi$ holds since $N(\vec x) \in S$ by assumption. Hence every candidate satisfies AC1.

\smallskip
\noindent\textit{AC2.} Let $C$ be returned via some confirmed region $\mathcal B$, i.e., $[\ell,h] \cap S = \emptyset$. Given $\mathrm{bound}_{\mathrm{SCM}}$ and $\mathrm{bound}_{\mathrm{NN}}$ are sound, every completion represented by $\mathcal B$ falsifies $\varphi$. Let $W_{\mathcal B} = \{X \in \vec V \setminus C : \mathcal B(X) = \textsc{contingency}\}$ be the contingency variables already committed in $\mathcal B$. Resolving every remaining \textsc{undecided} variable to \textsc{natural} yields one such completion, namely the intervention $[C \leftarrow \vec x'_C, W_{\mathcal B} \leftarrow \vec w_{\mathcal B}]$, where $\vec w_{\mathcal B}$ are the actual values of $W_{\mathcal B}$. Since this completion is represented by $\mathcal B$, it falsifies $\varphi$, so $W_{\mathcal B}$ is a valid witness contingency and $C$ satisfies AC2.

\smallskip
\noindent\textit{AC3 and completeness.} We prove jointly, by induction on $k=|C|$, that after all candidates of size at most $k$ have been processed: (i) every returned candidate of size at most $k$ satisfies AC1--AC3, and (ii) every actual cause of size at most $k$ has been returned.

\emph{Base case ($k=1$).} Every returned candidate satisfies AC1 and AC2 by the argument above. A singleton has no proper nonempty subset, so it trivially satisfies AC3. Conversely, let $C$ be an actual cause with $|C|=1$; since variables are binary, its unique witness value is the complement $\vec x'_C$. It cannot be skipped, since no smaller candidate has yet been processed, and the per-candidate search described under AC2 is exhaustive over contingencies, so its witness is found and $C$ is returned.

\emph{Inductive step.} Assume (i) and (ii) hold for all sizes smaller than $k$; consider size $k$.

For (i), let $C$ be a returned candidate with $|C|=k$; it satisfies AC1--AC2 as above. Suppose toward contradiction that some $D \subsetneq C$ satisfies AC1--AC2. Since $C$ is finite, $D$ contains an inclusion-minimal such subset $D^*$, which then satisfies AC3 too, i.e., $D^*$ is an actual cause with $|D^*| < k$. By the induction hypothesis, $D^*$ was already returned, so $C$ would have been skipped as its superset -- a contradiction. Hence $C$ satisfies AC3.

For (ii), let $C$ be an actual cause with $|C|=k$. By the induction hypothesis, every candidate already returned is a genuine actual cause of smaller size; since $C$ is minimal, none of these is a subset of $C$, so $C$ is not skipped. We next argue that $C$'s witness may be taken to be the full complement $\vec x'_C$: if some AC2 witness for $C$ left a variable $X_j \in C$ at its actual value, then $C \setminus \{X_j\}$, together with $X_j$ added to the contingency, would witness the same intervention and hence also satisfy AC1--AC2 -- contradicting the minimality of $C$. So $C$'s witness contingency $W$ pairs with the complement $\vec x'_C$, and the corresponding role assignment is represented by the initial region for $C$. This region cannot be pruned by output bounds, since its completion falsifies $\varphi$, nor by contingency cardinality until a witness of no greater size has already been secured for $C$. The search therefore finds a witness for $C$, and $C$ is returned.

This establishes (i) and (ii) for size $k$, completing the induction. Hence \algref{alg:causexbab} returns exactly the set of all actual causes.

\smallskip
\noindent\textit{Minimum-cardinality contingency.} When a region $\mathcal B$ is confirmed, resolving every remaining \textsc{undecided} variable to \textsc{natural} yields contingency $W_{\mathcal B}$. Every other completion represented by $\mathcal B$ has contingency $W_{\mathcal B} \cup W'$ for some $W'$, and hence cardinality at least $|W_{\mathcal B}|$, so $W_{\mathcal B}$ is the smallest contingency represented by $\mathcal B$. Since the search continues after a first witness is found, discarding only regions whose committed contingency already has cardinality at least $|W^*|$ for the current incumbent $W^*$ -- and no such region can improve on $W^*$, by the previous sentence -- the contingency retained when the search for $C$ terminates has minimum cardinality among all witnesses for $C$.
\end{proof}

\begin{remark}
\label{rem:numericalprecision}
The theoretical guarantees above assume exact arithmetic. In the implementation, every fully resolved witness is re-evaluated using high-precision arithmetic before being returned. To avoid floating-point error causing an incorrect prune or confirmation, decisions within a numerical tolerance of the acceptance-set boundary are treated as inconclusive and refined further. This adds negligible overhead, since the recheck only fires once per returned witness, and rules out large numerical errors, though errors within the tolerance margin itself are not eliminated.
\end{remark}

\begin{remark}
\label{rem:kmax}
The algorithm can optionally restrict the outer search to candidate sizes $k=1,\ldots,k_{\max}$ instead of $k=1,\ldots,|\vec V|$, to terminate early. In that case, the guarantee above only holds up to $k_{\max}$. A minimal cause larger than $k_{\max}$ is then never enumerated as a candidate. Bounding $k_{\max}$ is often desirable regardless of runtime, since smaller cause sets are more likely to be interpretable and actionable by a human, whereas a very large cause set may be less desirable as an explanation even when valid.
\end{remark}

\begin{remark}
\label{rem:satisfactionset}
Halpern and Pearl's definition does not specify what counts as an outcome $\varphi$ for continuous-valued models such as our NN predictor. The acceptance set $S$ is how we close this gap. The soundness and completeness above hold for any choice of $S$: changing $S$ changes which outcome $\varphi$ \algref{alg:causexbab} decides and the concrete prune/confirm conditions it uses, but not the algorithm itself or its guarantees. A returned pair $(C, \vec w^*)$ is thus guaranteed to be an actual cause \emph{of this specified outcome}.
\end{remark}

\section{Complexity Analysis}
\label{app:complexity}

For a candidate cause $C$ with $|C|=k$, full enumeration checks every possible contingency by assigning each of the $|\vec V|-k$ remaining variables to \textsc{natural} or \textsc{contingency}, giving $2^{|\vec V|-k}$ assignments per candidate. Summed over all candidate sizes $k=1,\dots,|\vec V|$, this yields a worst-case search space of
\begin{equation*}
	{\textstyle \sum_{k=1}^{|\vec V|} \binom{|\vec V|}{k} \cdot 2^{|\vec V|-k} \;\leq\; 3^{|\vec V|},}
\end{equation*}
i.e., in the worst case every intervenable variable is either a \textsc{cause}, a \textsc{contingency}, or \textsc{natural}.

This worst-case bound is consistent with the underlying decision problem's known intractability. Deciding whether a set of variables is an actual cause is NP-complete for binary models and $\Sigma_2^P$-complete for general models under Halpern and Pearl's original definition~\cite{Eiter2001}, and $D_2^P$-complete under the modified definition we adopt~\cite{Aleksandrowicz2017}. No polynomial-time algorithm is therefore expected to exist for our problem under standard complexity-theoretic assumptions.

\causexbab searches the same space, but not by directly enumerating full assignments. For a candidate cause $C$ with $|C|=k$, the search resolves the $n=|\vec V|-k$ non-cause variables one at a time: each region branches into two children, one resolving the next \textsc{undecided} variable to \textsc{natural} and one to \textsc{contingency}. In the worst case, where a candidate stays inconclusive until every variable is resolved, this produces a full binary search tree of depth $n$ containing $\sum_{i=0}^{n}2^i=2^{n+1}-1$ regions in total: $2^n$ fully-resolved leaves, matching the $2^{|\vec V|-k}$ assignments full enumeration also checks for this candidate, plus $2^n-1$ additional internal regions in which at least one non-cause variable is still \textsc{undecided}, a state full enumeration never represents, since it only ever constructs fully-resolved assignments. Summed over all candidate sizes, the overall number of regions \causexbab visits is therefore at most
\begin{equation*}
	{\textstyle \sum_{k=1}^{|\vec V|}\binom{|\vec V|}{k}\left(2^{|\vec V|-k+1}-1\right) \;<\; 2\cdot3^{|\vec V|},}
\end{equation*}
the same order, $O(3^{|\vec V|})$, as full enumeration's space, but with roughly twice as many distinct states to visit: every candidate's $2^{|\vec V|-k}-1$ internal \textsc{undecided} regions on top of the $2^{|\vec V|-k}$ leaves it shares with full enumeration.

Beyond visiting more states, each one, leaf or internal, also costs more: it requires a call to $\mathrm{bound}_{\mathrm{SCM}}$ and $\mathrm{bound}_{\mathrm{NN}}$, whereas full enumeration evaluates $N$ directly, once per leaf. Since a bound computation costs more than one such forward pass, our worst case can be more expensive per candidate than full enumeration in absolute terms, not just in state count. A real speedup thus requires most branches to be pruned or confirmed well before full depth, avoiding both the extra states and their higher per-state cost.

\section{Additional Experimental Results}
\label{app:additional_experiments}

Unless stated otherwise, each configuration is evaluated on 3 independently generated graphs and 3 sampled contexts per graph, yielding 9 runs per configuration. Lines in the runtime plots below show medians over these runs, with shaded regions indicating the interquartile range. A point is marked with $\times$ if that configuration and all subsequent ones timed out for that method. Graph, context, and teacher-network randomness are all derived deterministically from a single per-instance base seed.

\subsection{Number of SCM Nodes}
Full per-configuration results underlying \figref{fig:nodes} (\secref{sec:exp}) are given here. \tabref{tab:nodes} gives the node-scaling results.

\begin{table*}[t]
\centering
\caption{Median runtime (s) vs.\ number of SCM nodes $n$, computing all minimal causes ($k_{\max}=|V|$), \texttt{m4} architecture (Fig.~\ref{fig:nodes}).}
\label{tab:nodes}
\setbox0=\hbox{\begin{tabular}{lrrrrrrrrrrrrrrr}
\toprule
$n$ & 6 & 8 & 10 & 12 & 14 & 16 & 18 & 20 & 22 & 24 & 26 & 28 & 30 & 32 & 34 \\
\midrule
\causexbab & 3.6 & 3.9 & 4.3 & 4.6 & 5.0 & 5.2 & 5.9 & 5.8 & 6.7 & 8.1 & 15.3 & 34.5 & 122.3$^\dagger$ & 180.0$^\dagger$ & 180.0$^\ddagger$ \\
\brutef & 3.6 & 5.8 & 4.2 & 4.8 & 5.4 & 6.6 & 17.6 & 92.0 & 180.0$^\ddagger$ & 180.0$^\ddagger$ & 180.0$^\ddagger$ & 180.0$^\ddagger$ & 180.0$^\ddagger$ & 180.0$^\ddagger$ & 180.0$^\ddagger$ \\
\ilp & 7.0$^\dagger$ & 180.0$^\dagger$ & 180.0$^\ddagger$ & 180.0$^\ddagger$ & 180.0$^\ddagger$ & 180.0$^\ddagger$ & 180.0$^\ddagger$ & 180.0$^\ddagger$ & 180.0$^\ddagger$ & 180.0$^\ddagger$ & 180.0$^\ddagger$ & 180.0$^\ddagger$ & 180.0$^\ddagger$ & 180.0$^\ddagger$ & 180.0$^\ddagger$ \\
\aci & 0.1 & 0.3 & 2.8 & 37.1 & 180.0$^\dagger$ & 180.0$^\ddagger$ & 180.0$^\ddagger$ & 180.0$^\ddagger$ & 180.0$^\ddagger$ & 180.0$^\ddagger$ & 180.0$^\ddagger$ & 180.0$^\ddagger$ & 180.0$^\ddagger$ & 180.0$^\ddagger$ & 180.0$^\ddagger$ \\
\bottomrule
\end{tabular}}\ifdim\wd0>\textwidth
\resizebox{\textwidth}{!}{\box0}
\else
\begin{tabular*}{\textwidth}{@{\extracolsep{\fill}}lrrrrrrrrrrrrrrr@{}}
\toprule
$n$ & 6 & 8 & 10 & 12 & 14 & 16 & 18 & 20 & 22 & 24 & 26 & 28 & 30 & 32 & 34 \\
\midrule
\causexbab & 3.6 & 3.9 & 4.3 & 4.6 & 5.0 & 5.2 & 5.9 & 5.8 & 6.7 & 8.1 & 15.3 & 34.5 & 122.3$^\dagger$ & 180.0$^\dagger$ & 180.0$^\ddagger$ \\
\brutef & 3.6 & 5.8 & 4.2 & 4.8 & 5.4 & 6.6 & 17.6 & 92.0 & 180.0$^\ddagger$ & 180.0$^\ddagger$ & 180.0$^\ddagger$ & 180.0$^\ddagger$ & 180.0$^\ddagger$ & 180.0$^\ddagger$ & 180.0$^\ddagger$ \\
\ilp & 7.0$^\dagger$ & 180.0$^\dagger$ & 180.0$^\ddagger$ & 180.0$^\ddagger$ & 180.0$^\ddagger$ & 180.0$^\ddagger$ & 180.0$^\ddagger$ & 180.0$^\ddagger$ & 180.0$^\ddagger$ & 180.0$^\ddagger$ & 180.0$^\ddagger$ & 180.0$^\ddagger$ & 180.0$^\ddagger$ & 180.0$^\ddagger$ & 180.0$^\ddagger$ \\
\aci & 0.1 & 0.3 & 2.8 & 37.1 & 180.0$^\dagger$ & 180.0$^\ddagger$ & 180.0$^\ddagger$ & 180.0$^\ddagger$ & 180.0$^\ddagger$ & 180.0$^\ddagger$ & 180.0$^\ddagger$ & 180.0$^\ddagger$ & 180.0$^\ddagger$ & 180.0$^\ddagger$ & 180.0$^\ddagger$ \\
\bottomrule
\end{tabular*}
\fi
\vspace{2pt}
{\footnotesize $^\dagger$median over a batch that includes at least one timed-out run (capped at 180s) $^\ddagger$every run in the batch timed out\par}
\end{table*}

\subsection{Neural Network Size}
\label{app:networksize}
\figref{fig:network} shows the network-size runtime plot, \tabref{tab:network_configs} the architecture configurations, and \tabref{tab:network} the full per-architecture results.

\begin{table}[h]
\centering
\caption{Network architecture configurations used in the network scaling
         experiments. Total units refers to the sum of all hidden layer widths.}
\label{tab:network_configs}
\begin{tabular}{ll}
\toprule
\textbf{Label} & \textbf{Total units} \\
\midrule
\texttt{s2}  & 96    \\
\texttt{s3}  & 224   \\
\texttt{m4}  & 480   \\
\texttt{m5}  & 992   \\
\texttt{l6}  & 2016  \\
\texttt{l7}  & 4064  \\
\texttt{xl8} & 8160  \\
\bottomrule
\end{tabular}
\end{table}

\begin{table}[h]
\centering
\caption{Median runtime (s) vs.\ neural network architecture, $n=10$ nodes, computing all minimal causes ($k_{\max}=|V|$) (Fig.~\ref{fig:network}).}
\label{tab:network}
\setbox0=\hbox{\begin{tabular}{lrrrrrrr}
\toprule
Arch. & \texttt{s2} & \texttt{s3} & \texttt{m4} & \texttt{m5} & \texttt{l6} & \texttt{l7} & \texttt{xl8} \\
\midrule
\causexbab & 3.9 & 4.0 & 3.8 & 3.9 & 4.1 & 4.0 & 4.4 \\
\brutef & 4.1 & 3.7 & 3.7 & 3.8 & 4.6 & 4.2 & 4.1 \\
\ilp & 12.4 & 180.0$^\dagger$ & 180.0$^\ddagger$ & 180.0$^\ddagger$ & 180.0$^\ddagger$ & 180.0$^\ddagger$ & 180.0$^\ddagger$ \\
\aci & 2.6 & 2.3 & 1.9 & 2.6 & 2.6 & 2.9 & 2.6 \\
\bottomrule
\end{tabular}}\ifdim\wd0>\columnwidth
\resizebox{\columnwidth}{!}{\box0}
\else
\begin{tabular*}{\columnwidth}{@{\extracolsep{\fill}}lrrrrrrr@{}}
\toprule
Arch. & \texttt{s2} & \texttt{s3} & \texttt{m4} & \texttt{m5} & \texttt{l6} & \texttt{l7} & \texttt{xl8} \\
\midrule
\causexbab & 3.9 & 4.0 & 3.8 & 3.9 & 4.1 & 4.0 & 4.4 \\
\brutef & 4.1 & 3.7 & 3.7 & 3.8 & 4.6 & 4.2 & 4.1 \\
\ilp & 12.4 & 180.0$^\dagger$ & 180.0$^\ddagger$ & 180.0$^\ddagger$ & 180.0$^\ddagger$ & 180.0$^\ddagger$ & 180.0$^\ddagger$ \\
\aci & 2.6 & 2.3 & 1.9 & 2.6 & 2.6 & 2.9 & 2.6 \\
\bottomrule
\end{tabular*}
\fi
\vspace{2pt}
{\footnotesize $^\dagger$median over a batch that includes at least one timed-out run (capped at 180s) $^\ddagger$every run in the batch timed out\par}
\end{table}

\begin{figure}[h]
\centering
\resizebox{1.0\linewidth}{!}{\includegraphics{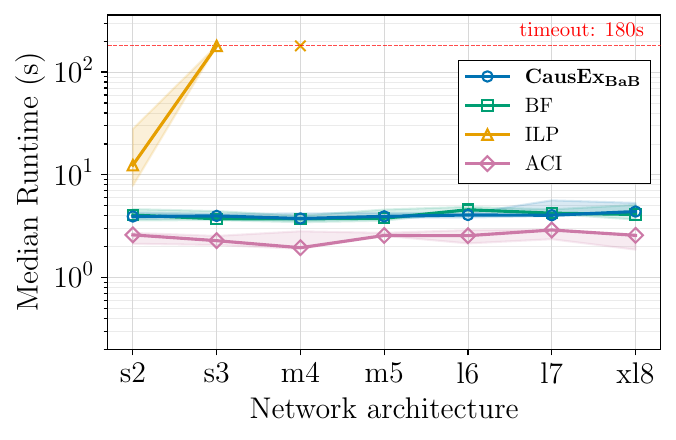}}
\caption{Runtime vs.\ neural network architecture, for all methods.}
\label{fig:network}
\end{figure}

\subsection{Statistical Significance}
\label{app:significance}

For the node-scaling and network-scaling experiments, we test whether \causexbab's runtime differs significantly from each baseline's using a Wilcoxon signed-rank test, matching runs by graph and context (i.e.\ comparing \causexbab and a baseline on the exact same graph instance and sampled context, not across different ones). \tabref{tab:significance_nodes} reports $p$-values vs.\ number of SCM nodes and \tabref{tab:significance_network} vs.\ network architecture. In the node-scaling experiment, the difference against \ilp and \aci is significant ($p<0.05$) at nearly every tested $n$, while the difference against \brutef only becomes significant from $n=18$ nodes onward, matching its close runtimes at smaller graph sizes. In the network-scaling experiment, \causexbab remains significantly faster than \ilp at every tested architecture, but \aci is significantly faster than \causexbab throughout, and the difference against \brutef is not significant at any architecture, consistent with \brutef's close (if slightly higher) runtimes there.

\begin{table*}[t]
\centering
\caption{Paired Wilcoxon signed-rank test ($p$-value) of \causexbab's runtime against each baseline, matched by graph and context, vs.\ number of SCM nodes $n$ (cf.\ Table~\ref{tab:nodes}).}
\label{tab:significance_nodes}
\setbox0=\hbox{\begin{tabular}{lrrrrrrrrrrrrrrr}
\toprule
$n$ & 6 & 8 & 10 & 12 & 14 & 16 & 18 & 20 & 22 & 24 & 26 & 28 & 30 & 32 & 34 \\
\midrule
\brutef & 0.203 & 0.004 & 0.652 & 0.820 & 0.426 & 0.074 & 0.004 & 0.004 & 0.004 & 0.004 & 0.004 & 0.004 & 0.031$^\dagger$ & -- & -- \\
\ilp & 0.004 & 0.004 & 0.004 & 0.004 & 0.004 & 0.004 & 0.004 & 0.004 & 0.004 & 0.004 & 0.004 & 0.004 & 0.031$^\dagger$ & -- & -- \\
\aci & 0.004 & 0.004 & 0.004 & 0.004 & 0.004 & 0.004 & 0.004 & 0.004 & 0.004 & 0.004 & 0.004 & 0.004 & 0.031$^\dagger$ & -- & -- \\
\bottomrule
\end{tabular}}\ifdim\wd0>\textwidth
\resizebox{\textwidth}{!}{\box0}
\else
\begin{tabular*}{\textwidth}{@{\extracolsep{\fill}}lrrrrrrrrrrrrrrr@{}}
\toprule
$n$ & 6 & 8 & 10 & 12 & 14 & 16 & 18 & 20 & 22 & 24 & 26 & 28 & 30 & 32 & 34 \\
\midrule
\brutef & 0.203 & 0.004 & 0.652 & 0.820 & 0.426 & 0.074 & 0.004 & 0.004 & 0.004 & 0.004 & 0.004 & 0.004 & 0.031$^\dagger$ & -- & -- \\
\ilp & 0.004 & 0.004 & 0.004 & 0.004 & 0.004 & 0.004 & 0.004 & 0.004 & 0.004 & 0.004 & 0.004 & 0.004 & 0.031$^\dagger$ & -- & -- \\
\aci & 0.004 & 0.004 & 0.004 & 0.004 & 0.004 & 0.004 & 0.004 & 0.004 & 0.004 & 0.004 & 0.004 & 0.004 & 0.031$^\dagger$ & -- & -- \\
\bottomrule
\end{tabular*}
\fi
\vspace{2pt}
{\footnotesize ``--'' marks configurations where the test is undefined (every matched pair's runtime difference is exactly zero, i.e.\ all methods time out identically), or based on fewer than 6 pairs, too few for any two-sided $p$-value to reach $0.05$. $^\dagger$marks $p$-values based on 6--8 pairs, since pairs where both methods time out are dropped as uninformative (identical capped runtime). With 9 paired runs per configuration, the smallest attainable two-sided $p$-value is $0.004$.\par}
\end{table*}

\begin{table}[h]
\centering
\caption{Paired Wilcoxon signed-rank test ($p$-value) of \causexbab's runtime against each baseline, matched by graph and context, vs.\ neural network architecture (cf.\ Table~\ref{tab:network}).}
\label{tab:significance_network}
\setbox0=\hbox{\begin{tabular}{lrrrrrrr}
\toprule
Arch. & \texttt{s2} & \texttt{s3} & \texttt{m4} & \texttt{m5} & \texttt{l6} & \texttt{l7} & \texttt{xl8} \\
\midrule
\brutef & 0.820 & 0.910 & 0.426 & 0.734 & 0.074 & 1.000 & 0.359 \\
\ilp & 0.008 & 0.004 & 0.004 & 0.004 & 0.004 & 0.004 & 0.004 \\
\aci & 0.004 & 0.004 & 0.004 & 0.004 & 0.004 & 0.004 & 0.004 \\
\bottomrule
\end{tabular}}\ifdim\wd0>\columnwidth
\resizebox{\columnwidth}{!}{\box0}
\else
\begin{tabular*}{\columnwidth}{@{\extracolsep{\fill}}lrrrrrrr@{}}
\toprule
Arch. & \texttt{s2} & \texttt{s3} & \texttt{m4} & \texttt{m5} & \texttt{l6} & \texttt{l7} & \texttt{xl8} \\
\midrule
\brutef & 0.820 & 0.910 & 0.426 & 0.734 & 0.074 & 1.000 & 0.359 \\
\ilp & 0.008 & 0.004 & 0.004 & 0.004 & 0.004 & 0.004 & 0.004 \\
\aci & 0.004 & 0.004 & 0.004 & 0.004 & 0.004 & 0.004 & 0.004 \\
\bottomrule
\end{tabular*}
\fi
\vspace{2pt}
{\footnotesize With 9 paired runs per configuration, the smallest attainable two-sided $p$-value is $0.004$.\par}
\end{table}

\subsection{ER Graphs: Number of SCM Nodes}
\label{app:er_results}

The main paper's benchmarks (\secref{sec:exp}) use Barab\'{a}si--Albert (BA) graphs. To check whether our results are specific to that graph model, we repeat the node-scaling experiment on Erd\H{o}s--R\'{e}nyi (ER) graphs at matched density, otherwise using the identical setup (\texttt{m4} architecture, $k_{\max}=|V|$, 3 graphs $\times$ 3 contexts per configuration).

\begin{figure}[h]
\centering
\resizebox{1.0\columnwidth}{!}{\includegraphics{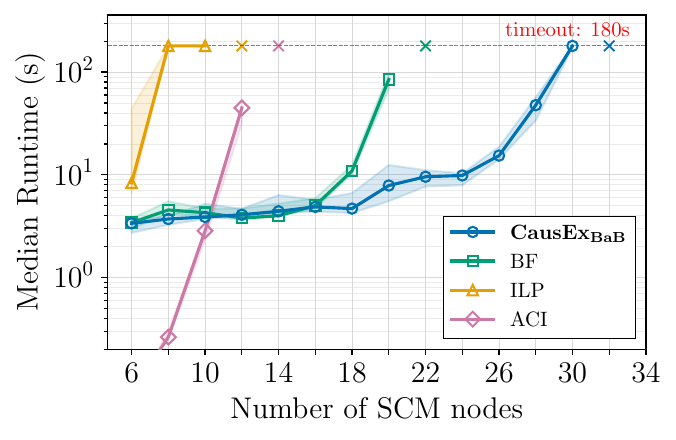}}
\caption{Runtime vs.\ number of SCM nodes on \emph{ER} graphs, for all methods. Cf.\ \figref{fig:nodes} for the matched BA-graph result.}
\label{fig:nodes_er}
\end{figure}

The relative ordering of methods and the overall shape of each curve matches the BA result (\figref{fig:nodes_er}): \causexbab times out on zero runs through $n=28$ on both graph types, \brutef's timeout rate jumps from 0\% to 100\% between $n=20$ and $n=22$ on both, and \ilp is already unusable by $n\geq10$--$12$ on both. The main quantitative difference is at the largest graph sizes: \causexbab's own timeout onset comes slightly earlier and steeper on ER than on BA (66.7\% timeout at $n=30$ and 100\% at $n=32$, vs.\ 33\% and 88.9\% at the same sizes on BA), and \aci's collapse likewise comes about two nodes earlier (100\% timeout already at $n=14$ on ER, vs.\ $n=16$ on BA). \ilp is the only method that fares slightly \emph{better} on ER at the smallest sizes (0\% timeout at $n=6$, vs.\ 11\% on BA). Full per-configuration results are in \tabref{tab:nodes_er}.

\begin{table*}[t]
\centering
\caption{Median runtime (s) vs.\ number of SCM nodes $n$ on \emph{ER} graphs, computing all minimal causes ($k_{\max}=|V|$), \texttt{m4} architecture (Fig.~\ref{fig:nodes_er}).}
\label{tab:nodes_er}
\setbox0=\hbox{\begin{tabular}{lrrrrrrrrrrrrrrr}
\toprule
$n$ & 6 & 8 & 10 & 12 & 14 & 16 & 18 & 20 & 22 & 24 & 26 & 28 & 30 & 32 & 34 \\
\midrule
\causexbab & 3.4 & 3.7 & 3.9 & 4.1 & 4.4 & 4.9 & 4.7 & 7.9 & 9.6 & 9.9 & 15.4 & 47.6 & 180.0$^\dagger$ & 180.0$^\ddagger$ & 180.0$^\ddagger$ \\
\brutef & 3.4 & 4.5 & 4.3 & 3.8 & 4.0 & 5.0 & 10.9 & 84.7 & 180.0$^\ddagger$ & 180.0$^\ddagger$ & 180.0$^\ddagger$ & 180.0$^\ddagger$ & 180.0$^\ddagger$ & 180.0$^\ddagger$ & 180.0$^\ddagger$ \\
\ilp & 8.4 & 180.0$^\dagger$ & 180.0$^\dagger$ & 180.0$^\ddagger$ & 180.0$^\ddagger$ & 180.0$^\ddagger$ & 180.0$^\ddagger$ & 180.0$^\ddagger$ & 180.0$^\ddagger$ & 180.0$^\ddagger$ & 180.0$^\ddagger$ & 180.0$^\ddagger$ & 180.0$^\ddagger$ & 180.0$^\ddagger$ & 180.0$^\ddagger$ \\
\aci & 0.1 & 0.3 & 2.8 & 44.8 & 180.0$^\ddagger$ & 180.0$^\ddagger$ & 180.0$^\ddagger$ & 180.0$^\ddagger$ & 180.0$^\ddagger$ & 180.0$^\ddagger$ & 180.0$^\ddagger$ & 180.0$^\ddagger$ & 180.0$^\ddagger$ & 180.0$^\ddagger$ & 180.0$^\ddagger$ \\
\bottomrule
\end{tabular}}\ifdim\wd0>\textwidth
\resizebox{\textwidth}{!}{\box0}
\else
\begin{tabular*}{\textwidth}{@{\extracolsep{\fill}}lrrrrrrrrrrrrrrr@{}}
\toprule
$n$ & 6 & 8 & 10 & 12 & 14 & 16 & 18 & 20 & 22 & 24 & 26 & 28 & 30 & 32 & 34 \\
\midrule
\causexbab & 3.4 & 3.7 & 3.9 & 4.1 & 4.4 & 4.9 & 4.7 & 7.9 & 9.6 & 9.9 & 15.4 & 47.6 & 180.0$^\dagger$ & 180.0$^\ddagger$ & 180.0$^\ddagger$ \\
\brutef & 3.4 & 4.5 & 4.3 & 3.8 & 4.0 & 5.0 & 10.9 & 84.7 & 180.0$^\ddagger$ & 180.0$^\ddagger$ & 180.0$^\ddagger$ & 180.0$^\ddagger$ & 180.0$^\ddagger$ & 180.0$^\ddagger$ & 180.0$^\ddagger$ \\
\ilp & 8.4 & 180.0$^\dagger$ & 180.0$^\dagger$ & 180.0$^\ddagger$ & 180.0$^\ddagger$ & 180.0$^\ddagger$ & 180.0$^\ddagger$ & 180.0$^\ddagger$ & 180.0$^\ddagger$ & 180.0$^\ddagger$ & 180.0$^\ddagger$ & 180.0$^\ddagger$ & 180.0$^\ddagger$ & 180.0$^\ddagger$ & 180.0$^\ddagger$ \\
\aci & 0.1 & 0.3 & 2.8 & 44.8 & 180.0$^\ddagger$ & 180.0$^\ddagger$ & 180.0$^\ddagger$ & 180.0$^\ddagger$ & 180.0$^\ddagger$ & 180.0$^\ddagger$ & 180.0$^\ddagger$ & 180.0$^\ddagger$ & 180.0$^\ddagger$ & 180.0$^\ddagger$ & 180.0$^\ddagger$ \\
\bottomrule
\end{tabular*}
\fi
\vspace{2pt}
{\footnotesize $^\dagger$median over a batch that includes at least one timed-out run (capped at 180s) $^\ddagger$every run in the batch timed out\par}
\end{table*}

\subsection{Maximum Cause Size $k_{\max}$}
\label{app:maxk}

\begin{figure}[h]
\centering
\resizebox{1.0\columnwidth}{!}{\includegraphics{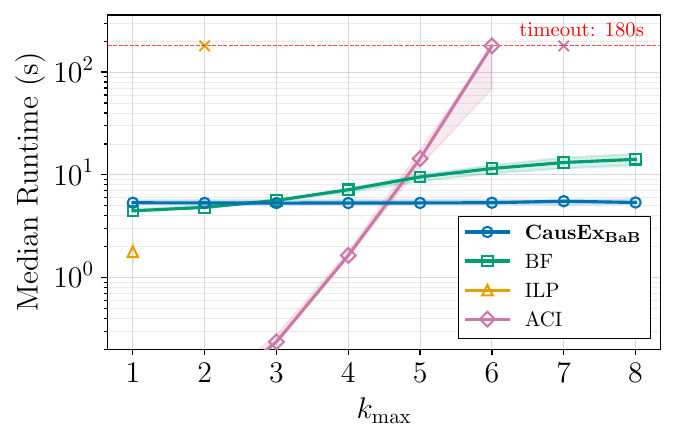}}
\caption{Runtime vs.\ maximum cause size $k_{\max}$ for all methods, $n=18$ nodes, \texttt{m4} architecture.}
\label{fig:maxk}
\end{figure}

We vary $k_{\max}$ from 1 to 8, fixing the graph size at 18 nodes and using the \texttt{m4} architecture (\figref{fig:maxk}). \causexbab's median runtime is essentially flat at 5.3--5.5s across all values of $k_{\max}$, while \brutef grows steadily from 4.5s at $k_{\max}=1$ to 14.2s at $k_{\max}=8$ (though it never times out at this graph size). \ilp solves $k_{\max}=1$ (median 1.8s) but times out on every run for $k_{\max}\geq 2$. \aci completes all runs through $k_{\max}=5$ (median growing from 0.05s to 14.4s) but times out on the majority of runs from $k_{\max}=6$ and on all runs by $k_{\max}=7$. Full per-configuration results are in \tabref{tab:maxk}.

\begin{table}[h]
\centering
\caption{Median runtime (s) vs.\ maximum cause size $k_{\max}$, $n=18$ nodes, \texttt{m4} architecture (Fig.~\ref{fig:maxk}).}
\label{tab:maxk}
\setbox0=\hbox{\begin{tabular}{lrrrrrrrr}
\toprule
$k_{\max}$ & 1 & 2 & 3 & 4 & 5 & 6 & 7 & 8 \\
\midrule
\causexbab & 5.3 & 5.3 & 5.3 & 5.3 & 5.3 & 5.3 & 5.5 & 5.4 \\
\brutef & 4.5 & 4.8 & 5.6 & 7.2 & 9.5 & 11.5 & 13.2 & 14.1 \\
\ilp & 1.8 & 180.0$^\ddagger$ & 180.0$^\ddagger$ & 180.0$^\ddagger$ & 180.0$^\ddagger$ & 180.0$^\ddagger$ & 180.0$^\ddagger$ & 180.0$^\ddagger$ \\
\aci & 0.0 & 0.1 & 0.2 & 1.6 & 14.4 & 180.0$^\dagger$ & 180.0$^\ddagger$ & 180.0$^\ddagger$ \\
\bottomrule
\end{tabular}}\ifdim\wd0>\columnwidth
\resizebox{\columnwidth}{!}{\box0}
\else
\begin{tabular*}{\columnwidth}{@{\extracolsep{\fill}}lrrrrrrrr@{}}
\toprule
$k_{\max}$ & 1 & 2 & 3 & 4 & 5 & 6 & 7 & 8 \\
\midrule
\causexbab & 5.3 & 5.3 & 5.3 & 5.3 & 5.3 & 5.3 & 5.5 & 5.4 \\
\brutef & 4.5 & 4.8 & 5.6 & 7.2 & 9.5 & 11.5 & 13.2 & 14.1 \\
\ilp & 1.8 & 180.0$^\ddagger$ & 180.0$^\ddagger$ & 180.0$^\ddagger$ & 180.0$^\ddagger$ & 180.0$^\ddagger$ & 180.0$^\ddagger$ & 180.0$^\ddagger$ \\
\aci & 0.0 & 0.1 & 0.2 & 1.6 & 14.4 & 180.0$^\dagger$ & 180.0$^\ddagger$ & 180.0$^\ddagger$ \\
\bottomrule
\end{tabular*}
\fi
\vspace{2pt}
{\footnotesize $^\dagger$median over a batch that includes at least one timed-out run (capped at 180s) $^\ddagger$every run in the batch timed out\par}
\end{table}

\subsection{Scalability}
\label{app:scalability}

\begin{figure}[h]
\centering
\resizebox{1.0\columnwidth}{!}{\includegraphics{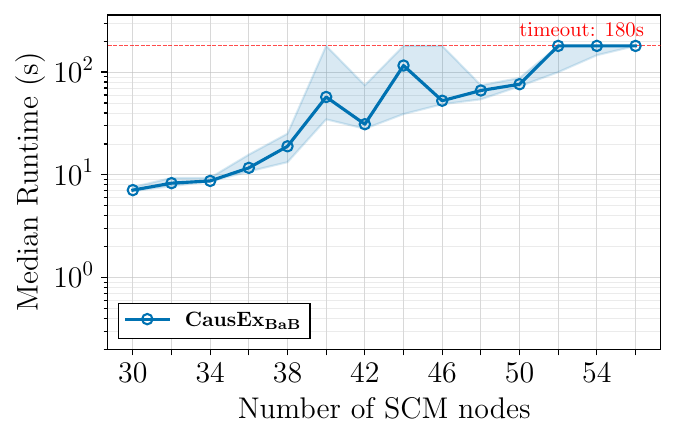}}
\caption{\causexbab runtime vs.\ number of SCM nodes on large graphs with $k_{\max}=8$ and the \texttt{m4} architecture. Competing methods timed out consistently at this scale and are omitted.}
    \label{fig:nodes_cbab}
\end{figure}

\begin{figure}[h]
\centering
\resizebox{1.0\columnwidth}{!}{\includegraphics{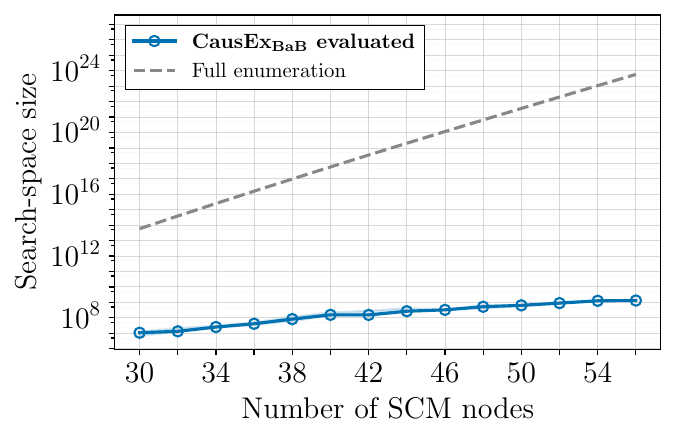}}
\caption{\causexbab nodes evaluated versus the theoretical full-enumeration bound, for $k_{\max}=8$ and the \texttt{m4} architecture.}
\label{fig:searchspace}
\end{figure}

Because the competing methods time out consistently beyond 20 nodes, we evaluate \causexbab alone, fixing $k_{\max}=8$, on graphs with up to 56 nodes (\figref{fig:nodes_cbab}, \tabref{tab:scaleability}). \causexbab completes almost every run up to $n = 38$, with a single exception at $n = 32$ (11\% timeout). Beyond $n = 40$ the timeout rate rises, though not monotonically (44\% at both $n = 44$ and $n = 46$, but 0\% at $n = 48$), and even at the largest tested size, $n = 56$, 11\% of runs still complete within the 180s budget. For reference, a graph with $n = 56$ intervenable nodes and $k_{\max} = 8$ has a worst-case search space of $\sum_{k=1}^{8}\binom{56}{k}\cdot 2^{56-k} \approx 5.8\times10^{23}$ candidate (cause set, contingency) pairs. \figref{fig:searchspace} plots the number of search-tree nodes \causexbab actually evaluates against this bound. The number evaluated stays many orders of magnitude below it, and the gap widens further as $n$ grows, i.e.\ pruning eliminates an increasing fraction of the theoretical search space at larger graph sizes.

\begin{table*}[t]
\centering
\caption{\causexbab median runtime (s) on large graphs, $k_{\max}=8$, \texttt{m4} architecture; competing methods time out throughout this range (Fig.~\ref{fig:nodes_cbab}).}
\label{tab:scaleability}
\setbox0=\hbox{\begin{tabular}{lrrrrrrrrrrrrrr}
\toprule
$n$ & 30 & 32 & 34 & 36 & 38 & 40 & 42 & 44 & 46 & 48 & 50 & 52 & 54 & 56 \\
\midrule
\causexbab & 7.1 & 8.3$^\dagger$ & 8.7 & 11.7 & 18.9 & 57.3$^\dagger$ & 31.1$^\dagger$ & 115.9$^\dagger$ & 52.6$^\dagger$ & 66.2 & 76.2$^\dagger$ & 177.4$^\dagger$ & 180.0$^\dagger$ & 180.0$^\dagger$ \\
\bottomrule
\end{tabular}}\ifdim\wd0>\textwidth
\resizebox{\textwidth}{!}{\box0}
\else
\begin{tabular*}{\textwidth}{@{\extracolsep{\fill}}lrrrrrrrrrrrrrr@{}}
\toprule
$n$ & 30 & 32 & 34 & 36 & 38 & 40 & 42 & 44 & 46 & 48 & 50 & 52 & 54 & 56 \\
\midrule
\causexbab & 7.1 & 8.3$^\dagger$ & 8.7 & 11.7 & 18.9 & 57.3$^\dagger$ & 31.1$^\dagger$ & 115.9$^\dagger$ & 52.6$^\dagger$ & 66.2 & 76.2$^\dagger$ & 177.4$^\dagger$ & 180.0$^\dagger$ & 180.0$^\dagger$ \\
\bottomrule
\end{tabular*}
\fi
\vspace{2pt}
{\footnotesize $^\dagger$median over a batch that includes at least one timed-out run (capped at 180s)\par}
\end{table*}

\subsection{Connected Components}
\label{app:components}
By construction, the BA preferential-attachment model almost surely produces a single connected component, since every new node must attach to at least $m$ existing nodes. To test robustness beyond this, we artificially introduce structural disconnection by partitioning the $n$ nodes into $c$ equally-sized subsets and sampling an independent BA DAG within each subset, yielding a graph whose components share no edges. The neural network still receives all $n$ endogenous variables as input, the disconnection affects only the causal structure, not the \ac{nn}'s feature set. We evaluate graphs of $n \in \{20, 22, 24, 26, 28\}$ nodes with $c \in \{1, 2, 3, 4, 5\}$ components, computing all minimal causes ($k_{\max} = |V|$) with the \texttt{m4} architecture.

\begin{figure}[h]
\centering
\resizebox{1.0\columnwidth}{!}{\includegraphics{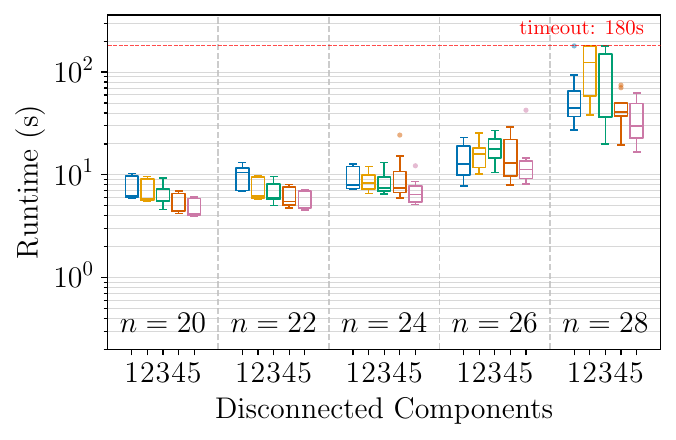}}
\caption{Runtime of \causexbab with varying numbers of connected components and SCM size (BA graphs, $m=6$).}
    \label{fig:components}
\end{figure}

\figref{fig:components} shows \causexbab runtimes for SCMs with 1 to 5 connected components across five graph sizes. The dominant factor is the number of nodes, with runtime growing steeply with $n$. The effect of the number of components is smaller and noisier. At $n \leq 24$, more components tend to reduce runtime (e.g.\ at $n=20$, median drops from 6.2s at $c=1$ to 4.2s at $c=5$), but the trend is not monotone at every $n$ and can invert, as seen at $n=28$, where $c=2$ is the slowest configuration (median 123.9s, 3 of 9 runs timing out) despite $c=1$ (44.6s, also affected by a timeout) and $c=5$ (30.0s, with no timeouts) both being faster.

The main driver is the number of valid causes, which grows substantially and unevenly with more components (e.g.\ at $n=28$ the mean cause count ranges from 2{,}203 at $c=5$ to 11{,}118 at $c=2$), so the algorithm must enumerate a much larger and less predictable solution set as the number of components grows, rather than pruning becoming steadily less effective. Pruning efficiency itself changes comparatively little and non-monotonically across $c$ at any fixed $n$. Full per-configuration results are in \tabref{tab:components}.

\begin{table}[h]
\centering
\caption{\causexbab median runtime (s) vs.\ number of components (rows) and SCM size $n$ (columns), computing all minimal causes ($k_{\max}=|V|$), \texttt{m4} architecture (Fig.~\ref{fig:components}).}
\label{tab:components}
\setbox0=\hbox{\begin{tabular}{lrrrrr}
\toprule
$c$ & 20 & 22 & 24 & 26 & 28 \\
\midrule
1 & 6.2 & 10.5 & 7.9 & 12.8 & 44.6$^\dagger$ \\
2 & 5.8 & 6.2 & 8.3 & 15.9 & 123.9$^\dagger$ \\
3 & 5.6 & 6.0 & 7.4 & 17.9 & 36.9$^\dagger$ \\
4 & 4.5 & 5.5 & 7.5 & 13.0 & 41.1 \\
5 & 4.2 & 4.8 & 6.4 & 11.2 & 30.0 \\
\bottomrule
\end{tabular}}\ifdim\wd0>\columnwidth
\resizebox{\columnwidth}{!}{\box0}
\else
\begin{tabular*}{\columnwidth}{@{\extracolsep{\fill}}lrrrrr@{}}
\toprule
$c$ & 20 & 22 & 24 & 26 & 28 \\
\midrule
1 & 6.2 & 10.5 & 7.9 & 12.8 & 44.6$^\dagger$ \\
2 & 5.8 & 6.2 & 8.3 & 15.9 & 123.9$^\dagger$ \\
3 & 5.6 & 6.0 & 7.4 & 17.9 & 36.9$^\dagger$ \\
4 & 4.5 & 5.5 & 7.5 & 13.0 & 41.1 \\
5 & 4.2 & 4.8 & 6.4 & 11.2 & 30.0 \\
\bottomrule
\end{tabular*}
\fi
\vspace{2pt}
{\footnotesize $^\dagger$median over a batch that includes at least one timed-out run (capped at 180s)\par}
\end{table}

\subsection{Sparsity}
\label{app:sparsity}
In the BA model, the attachment parameter $m$ controls how many edges each new node forms, directly determining graph density. At $m=1$ the graph reduces to a tree, while larger $m$ produces progressively denser graphs with higher average in-degree. We fix $n=28$ nodes and vary $m \in \{1,\ldots,8\}$ to test how edge density affects runtime, computing all minimal causes ($k_{\max}=|V|$) with the \texttt{m4} architecture constant.

\begin{figure}[h]
\centering
\resizebox{1.0\columnwidth}{!}{\includegraphics{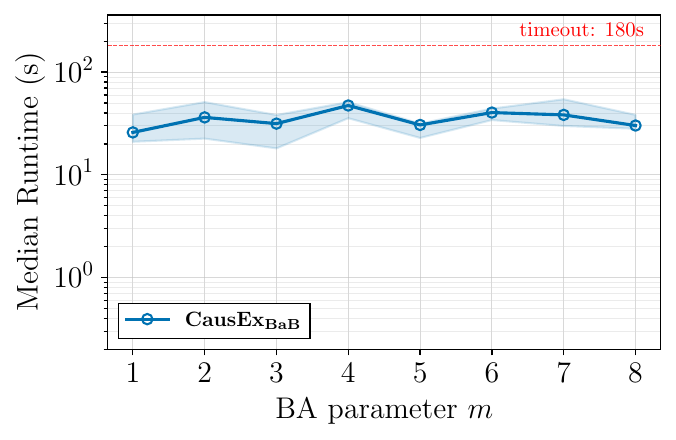}}
\caption{Runtime of \causexbab on BA graphs with $n=28$ nodes, varying attachment parameter $m$.}
    \label{fig:sparsity}
\end{figure}

\figref{fig:sparsity} shows no clear relationship between edge density and runtime. Median runtime stays within a 26--47s band across all tested values of $m$, with no monotone trend, and variance is high at every value. Only 4 of 72 runs time out, all at $m=4$ or $m=7$. Overall, \causexbab is robust to edge density at this graph size. Full per-configuration results are in \tabref{tab:sparsity}.

\begin{table}[h]
\centering
\caption{\causexbab median runtime (s) vs.\ BA attachment parameter $m$, $n=28$ nodes, computing all minimal causes ($k_{\max}=|V|$) (Fig.~\ref{fig:sparsity}).}
\label{tab:sparsity}
\setbox0=\hbox{\begin{tabular}{lrrrrrrrr}
\toprule
$m$ & 1 & 2 & 3 & 4 & 5 & 6 & 7 & 8 \\
\midrule
\causexbab & 25.9 & 36.2 & 31.5 & 47.3$^\dagger$ & 30.6 & 40.4 & 38.3$^\dagger$ & 30.2 \\
\bottomrule
\end{tabular}}\ifdim\wd0>\columnwidth
\resizebox{\columnwidth}{!}{\box0}
\else
\begin{tabular*}{\columnwidth}{@{\extracolsep{\fill}}lrrrrrrrr@{}}
\toprule
$m$ & 1 & 2 & 3 & 4 & 5 & 6 & 7 & 8 \\
\midrule
\causexbab & 25.9 & 36.2 & 31.5 & 47.3$^\dagger$ & 30.6 & 40.4 & 38.3$^\dagger$ & 30.2 \\
\bottomrule
\end{tabular*}
\fi
\vspace{2pt}
{\footnotesize $^\dagger$median over a batch that includes at least one timed-out run (capped at 180s)\par}
\end{table}

\subsection{Cause Sizes}
\label{app:causesizes}

\begin{figure}[h]
\centering
\resizebox{1.0\columnwidth}{!}{\includegraphics{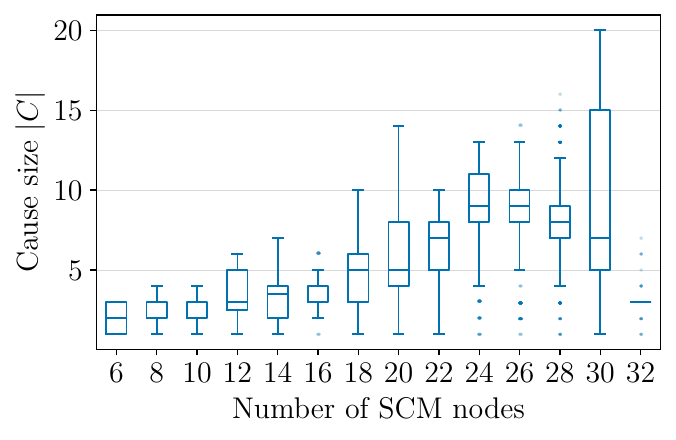}}
\caption{Distribution of $|C|$ over minimal actual causes found, vs.\ number of SCM nodes $V$, for \causexbab with $k_{\max} = |V|$. Boxes show the median and interquartile range over completed instances at each graph size (timeouts and OOM runs excluded), whiskers extend to $1.5\times$ IQR, outliers omitted.}
\label{fig:causesizes}
\end{figure}

\figref{fig:causesizes} shows the distribution of minimal cause sizes $|C|$ found by \causexbab across completed instances of the node-scaling benchmark ($k_{\max}=|V|$). Sizes are broadly spread and not concentrated near the minimum. \tabref{tab:cause_size_dist} gives the cumulative share of causes captured as the size cap $k_{\max}$ increases. Capping the search at $k_{\max}=8$ retains only $59.4\%$ of all minimal causes found by unrestricted search, and even $k_{\max}=15$ still misses $5.3\%$. \tabref{tab:cause_counts} instead reports, per instance, how many distinct minimal causes are found. $18.3\%$ of instances have none, $13.0\%$ have exactly one, and the remaining $68.7\%$ have two or more, with $29.6\%$ having at least 15.

\begin{table}[h]
\centering
\caption{Cumulative share of the 10885 minimal causes found with size $|C| \leq k_{\max}$ (i.e.\ the fraction \causexbab would still capture if capped at that $k_{\max}$; not to be confused with the number of \emph{distinct} causes per instance in Table~\ref{tab:cause_counts}), over the same completed \causexbab runs on graphs of $n=6$ to $32$ nodes with $k_{\max}=|V|$.}
\label{tab:cause_size_dist}
\setbox0=\hbox{\begin{tabular}{lrr}
\toprule
$k_{\max}$ & $n$ & \% \\
\midrule
1 & 52 & 0.5 \\
2 & 181 & 1.7 \\
5 & 2012 & 18.5 \\
8 & 6466 & 59.4 \\
10 & 9079 & 83.4 \\
15 & 10307 & 94.7 \\
\bottomrule
\end{tabular}}\ifdim\wd0>\columnwidth
\resizebox{\columnwidth}{!}{\box0}
\else
\begin{tabular*}{\columnwidth}{@{\extracolsep{\fill}}lrr@{}}
\toprule
$k_{\max}$ & $n$ & \% \\
\midrule
1 & 52 & 0.5 \\
2 & 181 & 1.7 \\
5 & 2012 & 18.5 \\
8 & 6466 & 59.4 \\
10 & 9079 & 83.4 \\
15 & 10307 & 94.7 \\
\bottomrule
\end{tabular*}
\fi
\end{table}

\begin{table}[h]
\centering
\caption{Number of \emph{distinct} minimal causes found per instance (not to be confused with the cause \emph{size} $|C|$ in Fig.~\ref{fig:causesizes}), over 115 completed \causexbab runs on graphs of $n=6$ to $32$ nodes with $k_{\max}=|V|$.}
\label{tab:cause_counts}
\setbox0=\hbox{\begin{tabular}{lrr}
\toprule
\# causes & $n$ & \% \\
\midrule
0 & 21 & 18.3 \\
1 & 15 & 13.0 \\
$\geq$2 & 79 & 68.7 \\
$\geq$5 & 59 & 51.3 \\
$\geq$10 & 42 & 36.5 \\
$\geq$15 & 34 & 29.6 \\
\bottomrule
\end{tabular}}\ifdim\wd0>\columnwidth
\resizebox{\columnwidth}{!}{\box0}
\else
\begin{tabular*}{\columnwidth}{@{\extracolsep{\fill}}lrr@{}}
\toprule
\# causes & $n$ & \% \\
\midrule
0 & 21 & 18.3 \\
1 & 15 & 13.0 \\
$\geq$2 & 79 & 68.7 \\
$\geq$5 & 59 & 51.3 \\
$\geq$10 & 42 & 36.5 \\
$\geq$15 & 34 & 29.6 \\
\bottomrule
\end{tabular*}
\fi
\end{table}

\subsection{Branch Selection}
\label{app:branchselection}

\begin{figure}[h]
\centering
\resizebox{1.0\columnwidth}{!}{\includegraphics{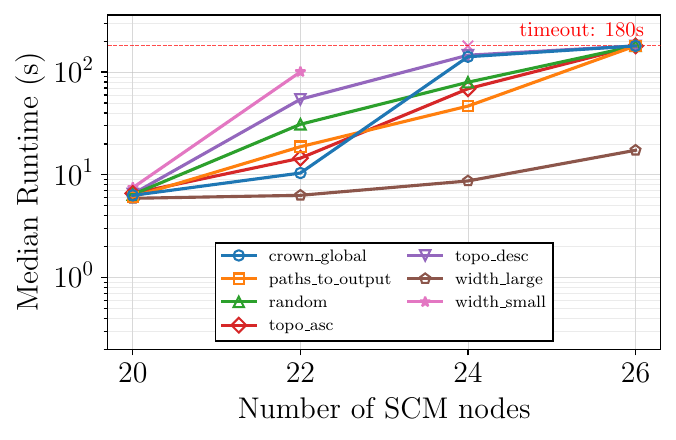}}
\caption{Runtime vs.\ number of SCM nodes for seven branch-selection strategies. \texttt{width\_large} is used as default in all other experiments.}
\label{fig:heuristic_nodes}
\end{figure}

\begin{figure}[h]
\centering
\resizebox{1.0\columnwidth}{!}{\includegraphics{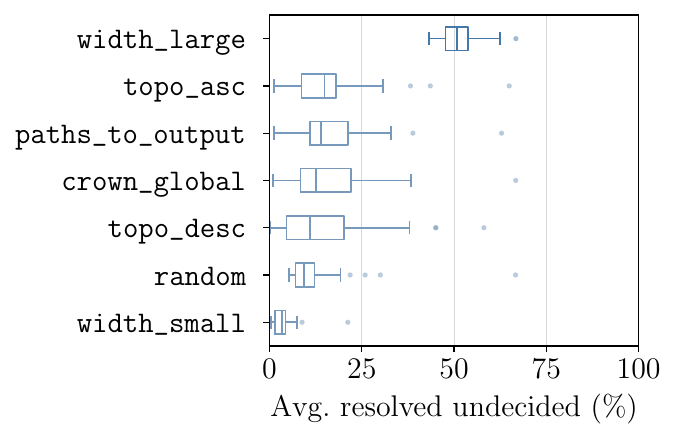}}
\caption{Pruning efficiency (\% undecided nodes resolved per branch) of the same seven strategies, mean $\pm$ std over all graph sizes and contexts.}
\label{fig:heuristic_prune}
\end{figure}

We evaluate seven branching strategies (see \algref{alg:causexbab}, $\mathrm{selectVar}$) on our method alone, using graphs of
20 to 26 nodes in increments of 2, computing all minimal causes ($k_{\max} = |V|$), and the \texttt{m4}
architecture. The strategies differ in which variable to split on next:

\begin{itemize}
    \item \texttt{topo\_asc} / \texttt{topo\_desc.}
          Selects the undecided variable with the lowest (asc) or highest (desc)
          topological index, branching from causes toward effects or vice versa.

    \item \texttt{width\_large} / \texttt{width\_small.}
          Selects the variable with the largest (large) or smallest (small)
          current interval width, prioritising either the most uncertain or the
          most nearly resolved dimension.

    \item \texttt{paths\_to\_output.}
          Selects the variable with the most directed paths to the output node,
          prioritising variables with the highest structural influence on the prediction.

    \item \texttt{crown\_global.}
          Uses CROWN sensitivity scores computed once at the root to rank
          variables by their influence on the output.

    \item \texttt{random.}
          Selects the next variable to split on uniformly at random among the
          undecided variables, serving as a baseline.
\end{itemize}

\texttt{width\_large}, which always selects the undecided variable with the widest current interval, clearly outperforms all alternatives (\figref{fig:heuristic_nodes}). At the largest tested graph size ($n = 26$), all competing strategies time out on the majority (67--100\%) of runs, while \texttt{width\_large} completes without a single timeout. \figref{fig:heuristic_prune} shows the pruning efficiency of each strategy. \texttt{width\_large} resolves on average 51.2\% of undecided nodes per pruned branch, compared to 3.7--16.9\% for the other strategies. That is, it prunes branches much earlier in the search tree, avoiding most of the exponential blow-up. We therefore use \texttt{width\_large} as the default in all other experiments. \tabref{tab:heuristics} gives the exact mean, standard deviation, and median pruning efficiency for every strategy.

\begin{table}[h]
\centering
\caption{Avg.\ \% of undecided nodes resolved per branch, pooled over all graph sizes and contexts (Fig.~\ref{fig:heuristic_prune}).}
\label{tab:heuristics}
\setbox0=\hbox{\begin{tabular}{lrrr}
\toprule
Strategy & Mean & Std & Median \\
\midrule
\texttt{width\_large} & 51.2 & 5.2 & 50.8 \\
\texttt{crown\_global} & 16.9 & 13.3 & 12.7 \\
\texttt{paths\_to\_output} & 16.9 & 11.9 & 14.0 \\
\texttt{random} & 12.3 & 10.8 & 9.4 \\
\texttt{topo\_asc} & 15.9 & 12.6 & 14.9 \\
\texttt{topo\_desc} & 15.7 & 14.4 & 10.9 \\
\texttt{width\_small} & 3.7 & 3.6 & 3.3 \\
\bottomrule
\end{tabular}}\ifdim\wd0>\columnwidth
\resizebox{\columnwidth}{!}{\box0}
\else
\begin{tabular*}{\columnwidth}{@{\extracolsep{\fill}}lrrr@{}}
\toprule
Strategy & Mean & Std & Median \\
\midrule
\texttt{width\_large} & 51.2 & 5.2 & 50.8 \\
\texttt{crown\_global} & 16.9 & 13.3 & 12.7 \\
\texttt{paths\_to\_output} & 16.9 & 11.9 & 14.0 \\
\texttt{random} & 12.3 & 10.8 & 9.4 \\
\texttt{topo\_asc} & 15.9 & 12.6 & 14.9 \\
\texttt{topo\_desc} & 15.7 & 14.4 & 10.9 \\
\texttt{width\_small} & 3.7 & 3.6 & 3.3 \\
\bottomrule
\end{tabular*}
\fi
\end{table}

\subsection{Feature Independence}
\label{app:independence}

We run \causexbab on the node-scaling benchmark (BA graphs, $n=6$--$34$, \texttt{m4}, $k_{\max}=|V|$) a second time on each \ac{scm} with all edges between endogenous variables removed (\texttt{causexbab\_indep}), then re-check every resulting cause against the true structural equations. Excluding instances where either method timed out ($n\geq30$), 111 of 135 remain.

Independence finds $59\%$ more causes that are $20\%$ larger on average, yet recovers only $33\%$ of the true minimal causes. \tabref{tab:indep_breakdown} shows why: three quarters of what it finds is a valid intervention but not minimal, because respecting the \ac{scm} would let a smaller cause reach the same effect by propagating to other variables.

\begin{table}[h]
\centering
\small
\begin{tabular}{lr}
\toprule
Independence causes & \% \\
\midrule
Exactly match an \ac{scm} cause & 20.7 \\
Valid but not minimal under \ac{scm} & 74.3 \\
Spurious under \ac{scm} & 5.0 \\
\bottomrule
\end{tabular}
\caption{Composition of causes found under the independence assumption, checked against the true \ac{scm} (pooled over 111 instances, 15{,}776 causes).}
\label{tab:indep_breakdown}
\end{table}

\section{Case Study: SNAP Quality-Control Review Targeting}
\label{app:casestudy}

The Supplemental Nutrition Assistance Program \cite{Leftin2026} is the largest food-assistance program in the United States. USDA runs Quality Control in order to measure payment error rates, and states carry financial liability for those rates, facing penalties when they stay high and receiving bonuses when they stay low. A review is a full re-investigation of a case and is correspondingly expensive, so only a small sample is reviewed each year. The natural use of a predictive model is therefore review targeting: given a certified household's eligibility profile, predict how likely it is that the original determination was wrong, and use that to decide which cases to review.

\paragraph{SCM.}
Who qualifies for SNAP is codified in the federal regulations governing the program, 7~CFR~Part~273 \cite{CFR273}, as an explicit Boolean combination of an income test, an asset test, and categorical-eligibility pathways. The \ac{scm} formalizes this statutory definition directly.
Every gate traces to a specific, numbered subsection of this regulation, obtained by reading the text and translating each test directly into a node: for example, 273.9(a) grants an income-test exemption to households that contain ``an elderly or disabled member,'' which resolves directly to \texttt{has\_elderly\_or\_disabled} = OR(\texttt{has\_elderly}, \texttt{has\_disabled}).

The \ac{scm} has 20 binary nodes (\tabref{tab:snap_nodes}): 10 raw leaves, each testing a single Boolean condition (e.g.\ whether all household members receive SSI, or whether gross income is below the threshold) computed from a group of the file's 65 raw columns (per-person benefit amounts, household size, income totals) via a threshold or membership test, and 10 derived nodes combining them with AND, OR, and NOT gates. Ten of the 20 features are therefore deterministic functions of the others.

\paragraph{Dataset.}
We use USDA's FY2024 SNAP Quality Control public-use file \cite{USDAFNS2026}, keeping only the columns needed to determine SNAP eligibility under the \ac{scm}, together with the fields needed to select households and construct the label. The raw file has $44{,}891$ households. After keeping only complete, determinately-labeled records in contiguous US/Guam/Virgin Islands states, $43{,}771$ remain, split $70/15/15$ into train/val/test ($30{,}639$/$6{,}565$/$6{,}567$).

\paragraph{Neural Network.}
We train an NN with hidden dimensions $[256\times128]$, ReLU activations, and a sigmoid output. The model takes a household's eligibility profile and predicts whether a re-investigation would find an error. \secref{sec:background}'s formal treatment of \acp{nn} composes only affine and ReLU layers. Bounding this additional sigmoid output layer is a direct extension, since sigmoid is monotone and its bounds follow immediately from the bounds on its input. Error prevalence is close to $40\%$ in every split, already well balanced. On the held-out test split, the NN reaches AUROC $0.622$ and accuracy $0.628$ (majority-class baseline $60.3\%$), comparable to a logistic-regression baseline on the same features (AUROC $0.618$, accuracy $0.620$): both find only a modest signal, expected for a label that records whether a human caseworker's original determination was later found wrong rather than a deterministic function of the household's eligibility profile. This does not affect our evaluation of \causexbab itself: the runtime comparison across methods and the comparison between treating inputs as independent versus respecting the \ac{scm} are both properties of the causal search over a fixed decision function, not of how well that function predicts the review outcome.

The regulation is used to compute the 10 derived nodes of the input space, not just the 10 raw leaves, so the network has access to the same derived concepts (e.g.\ \texttt{categorically\_eligible}, \texttt{assets\_test\_passed}) a caseworker reasons with. The label, recorded in the dataset as \texttt{STATUS}, is a QC reviewer's finding on a separate, later determination of whether the original certification was correct.

\paragraph{Computing Causes.}
We flag a household for review when the \ac{nn}'s output crosses the standard $0.5$ threshold, so the question a cause should answer is: why was this household flagged for review? We draw the held-out test instances flagged by the \ac{nn}, deduplicated by eligibility profile so repeated households do not inflate the sample: since the \ac{scm} has only 20 binary nodes, many real households share an identical profile, and only 11 distinct profiles exist among flagged households in the test split, so we use all 11 rather than a subsample of them.
We run \causexbab, brute-force, \textsc{ILP}, and \textsc{ACI} on each, searching for witnesses that flip the prediction below $0.5$, with $k_{\max} = |V|$ and a 180s timeout per method per instance.

\causexbab completes all 11 instances within budget (median $4.60$s, $0/11$ timeouts), finding a median of $23$ minimal causes, $33.7\%$ of them singletons. \brutef also completes every instance (median $43.24$s). \textsc{ILP} times out on $3/11$, and \textsc{ACI} times out on $10/11$ and succeeds only once (\tabref{tab:snap_benchmark}). This reverses the ordering seen in \secref{sec:exp}, where \textsc{ACI} is the faster of the two at comparable scale. We conjecture this comes from the SNAP \ac{scm}'s structure being a poorer match for \textsc{ACI}'s heuristic search than the random benchmark graphs, while \textsc{ILP} benefits here from a simpler acceptance set $S$ (\remref{rem:satisfactionset}).

\begin{table}[t]
\centering
\footnotesize
\begin{tabular}{lrrr}
\toprule
Method & Runtime (s) & IQR & Timeout \\
\midrule
\causexbab & 4.6   & [4.6, 4.6]   & 0/11 \\
\brutef    & 43.2  & [17.8, 49.1] & 0/11 \\
\ilp       & 122.5 & [113.0, 160.6] & 3/11 \\
\aci       & 180.0 & [180.0, 180.0] & 10/11 \\
\bottomrule
\end{tabular}
\caption{Runtime of all four methods, 180s timeout per method per instance.}
\label{tab:snap_benchmark}
\end{table}

\paragraph{Causes with vs.\ without the \ac{scm}.}
To measure what treating the 20 inputs as independent costs in practice, we rerun \causexbab on a variant of the \ac{scm} with the 10 derived-node edges removed, so all 20 features are independent, and re-evaluate every resulting cause through the real structural equations, letting the derived nodes recompute from their true parents instead of staying frozen at their factual value, checking whether the resulting prediction still crosses the $0.5$ threshold.

Ignoring the dependency structure more than doubles the causes found (median $50$ vs.\ $23$ under the \ac{scm}, \tabref{tab:snap_structure_comparison}). Of the $1{,}118$ independent-model causes across all 11 instances, $14.9\%$ are spurious once downstream nodes are recomputed from their true parents, and $14.8\%$ coincide with a cause found under the \ac{scm} (\tabref{tab:snap_structure_spurious}).

\begin{table}[htb]
\centering
\footnotesize
\begin{tabular}{lrrr}
\toprule
& Median & IQR & Range \\
\midrule
Causes, \ac{scm}   & 23   & [20.5, 34.5]  & [18, 96] \\
Causes, indep.     & 50   & [36.0, 113.5] & [18, 475] \\
\bottomrule
\end{tabular}
\caption{Causes found treating the 20 SNAP features as independent (no \ac{scm} edges) vs.\ under the \ac{scm}, 180s timeout.}
\label{tab:snap_structure_comparison}
\end{table}

\begin{table}[!htb]
\centering
\small
\begin{tabular}{lr}
\toprule
Independent-model causes (1{,}118 total) & Count (\%) \\
\midrule
Spurious under the \ac{scm}      & 167 (14.9\%) \\
Exactly match an \ac{scm} cause  & 166 (14.8\%) \\
\bottomrule
\end{tabular}
\caption{``Spurious'' counts independent-model causes whose intervention no longer flips the prediction below $0.5$ once derived nodes are recomputed from their true parents.}
\label{tab:snap_structure_spurious}
\end{table}

\paragraph{Limitations.}
The \ac{scm} is a simplified version of the real rule: it leaves out deductions, state waivers, and special-household cases, so \texttt{eligible} can disagree with what a caseworker actually decided for a given household. The households in our sample were also already certified for benefits, not drawn from all applicants, so our results speak to error risk among certified households rather than eligibility determination in general. Derived nodes such as \texttt{eligible} are themselves valid intervention targets under our formalism, so a returned cause may name one directly, a counterfactual a caseworker could not realize by acting on that node alone rather than on its determinants. We make no claim that a returned cause is the uniquely correct or most actionable explanation, only that it is a provably minimal one under AC1--AC3. The 14.9\% spurious-cause figure is specific to this SCM and this predictor. A more complete SCM or a stronger predictor could yield a different number, so it should be read as evidence from this case study rather than a general estimate for SNAP QC targeting.

\begin{table*}[t]
\centering
\small
\begin{tabularx}{\textwidth}{ p{0.26\textwidth} | p{0.54\textwidth} | X }
\toprule
Node & Definition & Source \\
\midrule
\multicolumn{3}{l}{\textit{Raw leaves (10)}} \\
\texttt{ssi\_receipt}         & All members' SSI benefits $>0$  & 273.2(j)(2)(i)(D) \\
\texttt{tanf\_all\_members}   & All members' TANF benefits $>0$ & 273.2(j)(2)(i)(A) \\
\texttt{mn\_fip}              & Minnesota Family Investment Program flag & TechDoc p.63 \\
\texttt{ga\_receipt}          & All members' GA benefits $>0$   & 273.2(j)(4) \\
\texttt{has\_elderly}         & Elderly member in unit     & 273.9(a) \\
\texttt{has\_disabled}        & Disabled member in unit    & 273.9(a) \\
\texttt{gross\_income\_low}   & $\leq$ Table~F.1 screen    & 273.9(a)(1)(i) \\
\texttt{net\_income\_low}     & $\leq$ Table~F.2 screen    & 273.9(a)(2)(i) \\
\texttt{assets\_low\_standard} & Assets $\leq$ \$2{,}750   & 273.8(b)/(b)(1) \\
\texttt{assets\_low\_higher}  & Assets $\leq$ \$4{,}250    & 273.8(b)/(b)(1) \\
\midrule
\multicolumn{3}{l}{\textit{Derived (10)}} \\
\texttt{tanf\_receipt}        & OR(\texttt{tanf\_all\_members}, \texttt{mn\_fip}) & 273.2(j)(2)(i)(A) + TechDoc p.63 \\
\texttt{has\_elderly\_or\_disabled} & OR(\texttt{has\_elderly}, \texttt{has\_disabled}) & 273.9(a) \\
\texttt{not\_elderly\_or\_disabled} & NOT(\texttt{has\_elderly\_or\_disabled}) & 273.8(b) \\
\texttt{assets\_term\_eld}    & AND(\texttt{has\_elderly\_or\_disabled}, \texttt{assets\_low\_higher}) & 273.8(b) \\
\texttt{assets\_term\_std}    & AND(\texttt{not\_elderly\_or\_disabled}, \texttt{assets\_low\_standard}) & 273.8(b) \\
\texttt{assets\_test\_passed} & OR(\texttt{assets\_term\_eld}, \texttt{assets\_term\_std}) & 273.8(b) \\
\texttt{gross\_income\_test\_passed} & OR(\texttt{has\_elderly\_or\_disabled}, \texttt{gross\_income\_low}) & 273.9(a) \\
\texttt{categorically\_eligible} & OR(\texttt{ssi\_receipt}, \texttt{tanf\_receipt}, \texttt{ga\_receipt}) & 273.2(j)(2)(i)/(j)(4) \\
\texttt{income\_asset\_eligible} & AND(\texttt{gross\_income\_test\_passed}, \texttt{net\_income\_low}, \texttt{assets\_test\_passed}) & 273.9(a) + 273.8(a) \\
\texttt{eligible}             & OR(\texttt{categorically\_eligible}, \texttt{income\_asset\_eligible}) & 273.8(a)/273.9(a) \\
\bottomrule
\end{tabularx}
\caption{The 20 nodes of the SNAP \ac{scm}. Sources are sections of 7~CFR~273 \cite{CFR273}, with dollar thresholds from the FY2024 Quality Control Technical Documentation \cite{Leftin2026}, Appendix~F.}
\label{tab:snap_nodes}
\end{table*}

\end{document}
\typeout{get arXiv to do 4 passes: Label(s) may have changed. Rerun}